\documentclass[letterpaper,10pt]{article}
\usepackage{todonotes}
\usepackage{float}

\usepackage{probml}

\usepackage{preambles/general_preamble__probml}

\ShortHeadings{Diffusion SSM}

\usepackage{blindtext}
\usepackage{wrapfig}

\usepackage{bm,amssymb,amsmath}

\makeatletter
\def\th@plain{%
  \thm@notefont{}
  \itshape 
}
\def\th@definition{%
  \thm@notefont{}
  \normalfont 
}
\makeatother

\def\1{\bm{1}}

\DeclareMathAlphabet{\mathsfit}{\encodingdefault}{\sfdefault}{m}{sl}
\SetMathAlphabet{\mathsfit}{bold}{\encodingdefault}{\sfdefault}{bx}{n}

\newcommand{\E}{\mathbb{E}}

\newcommand{\KL}{D_{\mathrm{KL}}}

\usepackage{diffusion} 

\let\OldKL\KL
\renewcommand{\KL}[2]{\OldKL\!\left(#1 \,\middle\|\, #2\right)}
\begin{document}

\title{Diffusion-Driven State Space Models}

\author[1,$*$]{Jack Ruder}
\author[1,$\dagger$]{Michael Wojnowicz}

\affil[1]{Montana State University}
\affil[$*$]{\url{jackruder@montana.edu}}
\affil[$\dagger$]{\url{michael.wojnowicz@montana.edu}}

\maketitle

\begin{abstract}
    In many domains, practitioners seek models that produce accurate forecasts while faithfully
      capturing latent system dynamics.
    Existing approaches typically sacrifice one of these goals: deep state space models often assume
      Gaussian latent transitions, limiting fit and forecasting, while diffusion models are highly
      expressive but lack principled inference for the underlying dynamics.
    To combine the strengths of both, we introduce the Diffusion-Driven State Space Model (DDSSM),
      which replaces the conventional Gaussian transition distribution with a diffusion model.
    Our DDSSM resolves the open problem of how to jointly train an autoencoder and a diffusion model on
      sequential data, thereby extending the literature on latent diffusion models for time series.
    Moreover, we find that the DDSSM empirically outperforms a state-of-the-art deep SSM at fitting and
      forecasting a simulated time series with multimodal transitions.
\end{abstract}


\section{Introduction} \label{sec:intro}

Diffusion models~\citep{sohl2015deep, ho2020denoising} have emerged as a powerful class of
  generative models for high-dimensional data.
For time series data, emerging evidence demonstrates state-of-the-art performance on forecasting,
  imputation, and sample-generation tasks~\citep{yang2024survey}, where diffusion models are
  typically used as generative samplers to produce future trajectories conditioned on past
  observations~\citep{su2025diffusion}.
Some approaches perform diffusion directly in observation space~\citep{tashiro2021csdi,
      rasul2021autoregressive, kollovieh2023predict, price2025probabilistic},
\AFTER{Add more cites to diffusion in latent space.
    Some are referenced in \url{https://openreview.net/pdf?
    id=nAyeE7cAS0.}  }while others apply diffusion in the latent space of an autoencoder~\citep{feng2024latent,
    liu2024align, Blattmann_2023_CVPR}, which can improve tractability for high-dimensional data.
However, existing latent time-series diffusion methods typically rely on two-stage training: first
  learning the autoencoder, then training the diffusion model.
This may produce an over-regularized latent space~\citep{liu2024align} that discards information
  needed for downstream generative dynamics, consistent with research on latent diffusion for static
  data where two-stage training leads to poorer representations and generative
  performance~\citep{vahdat2021score, shmakov2023end}.


In contrast, State Space Models (SSMs)~\citep{kalman1960new,shumway1982approach,durbin2001time}
  provide a classical framework for temporal data that naturally supports end-to-end training,
  captures timestep-by-timestep dynamics, and admits well-studied extensions to diverse settings.
SSMs typically assume observed data are generated from latent states that evolve over time via a
  Markov process, with each observation depending on the current state.
Classically, the linear Gaussian state-space model assumes first-order Markovian linear dynamics
  and linear Gaussian observation models~\citep{roweis1999unifying}.
To absolve linearity, transition and emission densities may be parameterized with deep neural
  networks~\citep{krishnan2015deep, krishnan2017structured}, where training is performed via
  stochastic backpropagation~\citep{rezende2014stochastic, kingma2014autoencoding}.
These neural-network-parameterized SSMs may be referred to as Deep SSMs, or Dynamical Variational
  Autoencoders (DVAEs); see~\citet{girin2022dynamical} for a review.
\AFTER{compare with neural ODEs/SDEs, which are also Deep SSMs.
    Make a distinction to structured SSMs (S4/MAMBA) which are not probabilistic models.
}
While neural-networks allow the mean of each latent state to be a \textit{nonlinear} function of
  the previous latent state, the transition densities in these models assume some parametric
  form~\citep{karl2016deep}, most often Gaussian~\citep{krishnan2015deep, johnson2016composing,
      fraccaro2017disentangled, krishnan2017structured,ansari2021deep, girin2022dynamical,
      revach2022kalmannet, dowling2024exponential, hashempoorikderi2024gated, chinellato2025state},
  thereby limiting their expressivity.
Moreover, Gaussian transition densities induce a Gaussian prior on the latent trajectories.
Within the VAE framework the Gaussian prior is known to over-regularize, yielding latent
  representations that poorly represent the structure of the data~\citep{tomczak2018vae,
      klushyn2019learning, chen2017variational}.
\TODO{Relate this over-regularization more explicitly to posterior collapse.
    Are they the same?
}
\AFTER{Add even more cites to deep SSMs with non-Gaussian transitions.
    I think I should know of a bunch of examples from my work in switching SSMs.
}


Inspired by the framework of composing probabilistic graphical models with neural
  networks~\citep{johnson2016composing}, we propose to get the ``best of both worlds'' by composing
  SSMs with diffusion models. Our \textit{diffusion-driven} SSMs (DDSSMs) naturally model the real-time dynamics of a system
  without imposing restrictive Gaussian assumptions about its latent dynamics. Further, we adapt SSM ideas to resolve the open problem of how to jointly train an autoencoder and a diffusion model on sequential data in a principled manner. In this way, we extend latent diffusion models for time series.

\AFTER{Here we could refer specifically to the form of the KL regularization term in the ELBO.)}  \TODO{Consider adding  other  benefits and/or why joint training will have further downstream benefits.}



\section{Method} \label{sec:method}

\textbf{Generative Model.}
Let \( \mathbf{x}_{1:T} = (\mathbf{x}_1, \dots, \mathbf{x}_T) \) with \( \mathbf{x}_t \in
  \mathbb{R}^D \) be a time series of \(T\) observations, and let \( \mathbf{u}_{1:T} =
  (\mathbf{u}_1, \dots, \mathbf{u}_T) \) with \( \mathbf{u}_t \in \mathbb{R}^V \) denote covariates
  observed at all time steps.
We assume the observations $\mathbf{x}_{1:T}$ are generated from latent variables \(
  \mathbf{z}_{1:T} = (\mathbf{z}_1, \dots, \mathbf{z}_T) \) with \( \mathbf{z}_t \in \mathbb{R}^M \)
  that capture the underlying system dynamics.
After applying standard conditional independence assumptions,
the density of the complete data likelihood \( p(\mathbf{x}_{1:T}, \mathbf{z}_{1:T} | \mathbf{u}_{1:T}) \) can be factorized in the form of a State Space Model (SSM) as
\begin{align}
    \label{eq:kdiff}
    p(\mathbf{x}_{1:T}, \mathbf{z}_{1:T} | \mathbf{u}_{1:T})
    =
     & \underbrace{
        p_{\eta,\theta}(\mathbf{z}_{1:j}, \mathbf{x}_{1:j} | \mathbf{u}_{1:j})
    }_{\mathclap{\text{State Initialization}}}
    \prod_{t=j+1}^{T}
    \underbrace{
    p^^t_{\psi}(\mathbf{z}_t | \mathbf{z}_{t-j:t-1}, \mathbf{u}_{t-j: t })\;
    }_{\mathclap{\text{Transition Density}}} \,
    \underbrace{
    p^^t_{\theta}(\mathbf{x}_t | \mathbf{z}_{t-j+1:t}, \mathbf{u}_{t-j + 1 : t })
    }_{\mathclap{\text{Emissions Density}}},
\end{align}
where \(\theta\) denotes the parameters of the emissions density, \(\psi\) the parameters of the transition density, and \(\eta\)  the parameters of the initialization model.
The hyperparameter \(j\) determines the size of the conditioning set for the transition and
  emissions densities, and in particular the Markov order of the latent sequence.
The superscript $t$ designates that the emissions and transitions are time-inhomogeneous.
See \cref{app:model_derivation} for the derivation and assumptions leading to this factorization.


As mentioned in the introduction, most existing deep SSMs model the transitions with Gaussian
  densities where the means and variances are parameterized by neural networks.
We depart from this assumption by modeling the transition density \(p^^t_\psi\) with a diffusion
  model, which allows us to learn arbitrary transition densities in the latent space.
Our diffusion-driven transition model induces a wider prior over the latent trajectories, allowing
  us to learn more flexible latent representations of the data.
The diffusion model introduces a sequence of intermediate (increasingly noisy) latent states \( (\mathbf{z}_t^1, \mathbf{z}_t^2, \ldots, \mathbf{z}_t^K) \) for each time step \(t\),  giving an augmented transition model
\begin{align}
    p^^t_\psi(\mathbf{z}_t^0, \mathbf{z}_t^{1:K} | \mathbf{z}_{t-j:t-1}, \mathbf{u}_{t-j: t }) & =
    p(\mathbf{z}_t^K) \prod_{k=1}^K p^^t_\psi(\mathbf{z}_t^{k-1} | \mathbf{z}_t^k,  \mathbf{z}_{t-j:t-1}, \mathbf{u}_{t-j: t }), \label{eq:transition_augmented}
\end{align}
where $\mathbf{z}_t^0 \defeq \mathbf{z}_t$ and $\mathbf{z}_t^K$ is approximately distributed as $\N(\+0,\+I)$.\TODO{\cref{app:diffusion_background} should state the explicit form of the steps in the denoising process, like Equation 1 in the DDPM paper.
Then we can direct the reader to that here.
}
Marginalizing \( \mathbf{z}_t^{1:K} \) recovers the transition density in \cref{eq:kdiff} while
  enabling complex, multimodal transitions beyond Gaussian dynamics.
Following~\citep{tashiro2021csdi}, we include the time index $t$ and the diffusion noise level $k$
  as conditioning inputs, such that a single neural network with parameters $\psi$ can parameterize
  the family of densities $(p^^t_\psi)_{t=1}^T$ across all time steps and noise levels\footnotemark.
\footnotetext{Some works instead include the time
    \textit{difference} as a covariate, in which case the transition model is
    time-homogeneous~\citep{krishnan2017structured}.}

For simplicity, we assume Gaussian emissions, with the diffusion model handling non-Gaussian
  structure and the emissions capturing any remaining uncertainty.
We choose the standard
  parameterization\TODO{Is it standard to take time as an input?} of the emissions as \(p^^t_\theta(\mathbf{x}_t
  | \mathbf{r}_t) = \mathcal{N}(\mu_\theta(\mathbf{r}_t), \Sigma_\theta(\mathbf{r}_t))\), where
  \(\mu_\theta\) and \(\Sigma_\theta\) are neural networks mapping recent latent states and
  covariates $\mathbf{r}_t = (\mathbf{z}_{t-j+1:t}, \mathbf{u}_{t-j + 1 : t })$ to the mean and
  covariance parameters of a Gaussian distribution in observation space.
As with the transitions, a single neural network can be used to parameterize the emissions at all
  time steps by including the time index \(t\) within the covariates $\+u_t$.
We model the initialization density via the factorization \(p_{\eta,\theta}(\mathbf{z}_{1:j},
  \mathbf{x}_{1:j} | \mathbf{u}_{1:j}) = p_\eta(\mathbf{z}_{1:j} | \mathbf{u}_{1:j})
  p^^t_\theta(\mathbf{x}_{1:j} | \mathbf{z}_{1:j}, \mathbf{u}_{1:j})\).
Then, \(p_\eta(\mathbf{z}_{1:j} | \mathbf{u}_{1:j})\) is made expressive using a variational
  hierarchical prior (VHP)~\citep{klushyn2019learning, klushyn2021latent}---as demonstrated to be
  effective for deep SSMs.
\AFTER{Replace VHP with diffusion model}
See \cref{app:vhp} for further discussion and implementation details.


\textbf{Inference.}
As the true posterior density \( p(\mathbf{z}_{1:T}, \{\mathbf{z}_t^{1:K}\}_{t=j+1}^T |
  \mathbf{x}_{1:T}, \mathbf{u}_{1:T}) \) is intractable, we approximate it with a variational density
  \(q_\phi(\mathbf{z}_{1:T}, \{\mathbf{z}_t^{1:K} \}_{t=j+1}^T | \mathbf{x}_{1:T}, \mathbf{u}_{1:T})
  \) with learnable parameters \(\phi\) .
Our \cref{lem:dsep} (see \cref{app:post_fact}) along with the well-known fact that the optimal variational posterior factorizes identically to the true posterior~\citep{wainwright2008graphical}  implies that we can assume the following factorization for the variational posterior density \( q_\phi(\mathbf{z}_{1:T}, \{\mathbf{z}_t^{1:K}\}_{t=j+1}^T | \mathbf{x}_{1:T}, \mathbf{u}_{1:T}) \) without making any approximation other than the dependence on $\phi$:
\begin{equation}
    \label{eq:var_post_main_body}
    \begin{split}
         & q_\phi(\mathbf{z}_{1:T},
        \{\mathbf{z}_t^{1:K}\}_{t=j+1}^T | \mathbf{x}_{1:T}, \mathbf{u}_{1:T})
        =
        \underbrace{
        q_\phi\!\left(\mathbf{z}_{1 : j} | \mathbf{x}_{1 : T}, \mathbf{u}_{1 : j}\right) }_{\mathclap{\text{State Initialization}}} \\
         & \;\times\;
        \underbrace{\prod_{t=j+1}^{T}
            q^^t_\phi\!\left(\mathbf{z}_t |\, \mathbf{z}_{t-j:t-1}, \mathbf{x}_{t:T}, \mathbf{u}_{t-j  : t} \right)
        }_{\mathclap{\text{Smoothed Transitions}}}
        \;\times\;
        \underbrace{\prod_{t=j+1}^{T}
        q_{\text{chain}}(\mathbf{z}_t^{1:K} | \mathbf{z}_t)
        }_{\mathclap{\text{Diffusion Noising Process}}}.
    \end{split}
\end{equation}
Notably, this factorization contains the Kalman smoothing factor \(q^^t_\phi\left(\mathbf{z}_t |
  \mathbf{z}_{t-j : t-1}, \mathbf{x}_{t:T}, \mathbf{u}_{t-j : t}\right)\), so the optimal encoder's
  latent state at time \(t\) depends on the \textit{past} latent states and covariates, but
  \textit{future} observations.
\TODO{Here give a few cites for contrast from the literature on latent diffusion for time series where this assumption was not made (but could have been for some gain), or was explicitly violated.}
Also note that as with models for static data, the diffusion noising process does not require
  learning.

We jointly learn the variational parameters $\phi$ along with our model's generative parameters
  $(\theta, \psi, \eta)$ by maximizing an evidence lower bound (ELBO) on the marginal log-likelihood
  \( \log p(\mathbf{x}_{1:T} | \mathbf{u}_{1:T}) \).
The ELBO for our model is
\begin{subequations} \label{eq:elbo_full_body}
    \begin{align}
        \log p\left(\mathbf{x}_{1:T} | \mathbf{u}_{1 : T}\right) \nonumber
         & \geq - \underbrace{ \KL{q_\phi(\mathbf{z}_{1 : j} | \mathbf{x}_{1: T}, \mathbf{u}_{1: T})}{p_{\eta}(\mathbf{z}_{1:j} | \mathbf{u}_{1:j})}
        }_{\text{KL Divergence for Initialization}} \\[-4pt]
         & + \underbrace{\sum_{t=1}^T \mathbb{E}_{q_\phi(\mathbf{z}_{1:T} | \mathbf{x}_{1:T}, \mathbf{u}_{1 : T})} \Big[ \log p^^t_\theta\left(\mathbf{x}_t | \mathbf{z}_{\max(t-j+1,1) : t}, \mathbf{u}_{\max(t-j + 1, 1) : t}\right)\Big]}_{\text{Negative Reconstruction Error}} \label{eq:term_seq_reconstruct_body} \\[-4pt]
         & - \underbrace{\sum_{t=j+1}^T \KL{q^^t_\phi(\mathbf{z}_t | \mathbf{z}_{t-j:t-1}, \mathbf{x}_{1:T}, \mathbf{u}_{1 : T})}{p^^t_\psi(\mathbf{z}_t  | \mathbf{z}_{t-j:t-1}, \mathbf{u}_{t-j: t })}}_{\text{KL Divergence for Transition}}, \label{eq:term_kl_trans} \\[-\belowdisplayskip\vspace{-1em}] \nonumber
    \end{align}
\end{subequations}
where \cref{eq:term_kl_trans} can be further rewritten as
\begin{subequations}
    \begin{align}
         & \underbrace{\sum_{t=j+1}^T \mathbb{E}_{q_\phi(\mathbf{z}_{1:T}, \left\{\mathbf{z}_t^{1:K}\right\}_{t=j+1}^T | \mathbf{x}_{1:T}, \mathbf{u}_{1 : T})} \Big[\log p(\mathbf{z}_t^K) + \sum_{k=1}^K -\log q(\mathbf{z}_t^{k} | \mathbf{z}_t^{k-1}) +\log p^^t_\psi(\mathbf{z}_t^{k-1} | \mathbf{z}_t^k, \mathbf{c}_t)\Big]}_{\text{Diffusion Chain}} \label{eq:term_diff_chain_body} \\[-4pt]
         & \qquad\qquad\qquad\qquad+ \underbrace{\sum_{t=j+1}^T \mathbb{E}_{q_\phi(\mathbf{z}_{1:T} | \mathbf{x}_{1:T}, \mathbf{u}_{1 : T})} \Big[- \log q^^t_\phi\left(\mathbf{z}_t | \mathbf{z}_{t-j : t-1},  \mathbf{x}_{t:T}, \mathbf{u}_{t-j : t}\right)\Big]}_{\text{Posterior Transition Entropy}}, \label{eq:post_entropy_body} \\[-\belowdisplayskip\vspace{-1em}] \nonumber
    \end{align}
\end{subequations}
with \(\mathbf{c}_t = (\mathbf{z}_{t-j:t-1}, \mathbf{u}_{t-j: t })\) denoting the conditioning set for the diffusion model at time \(t\).
\TODO{Do the q's need t superscripts?}
Unlike existing latent time-series diffusion models\TODO{TODO: Needs cites}\AFTER{We should back
  this up empirically} that model \(q_\phi(\mathbf{z}_{1:T} | \mathbf{x}_{1:T}, \mathbf{u}_{1:T})\)
  without factorization, our objective mirrors the conditional independence structure of the true
  posterior, providing a strong inductive bias that narrows the ELBO approximation gap.
Specifically per \cref{eq:var_post_main_body}, the encoder constructs latent representations for
  \(\mathbf{z}_t\) by connecting the past (via \(\mathbf{z}_{t-j :t-1}\)) to the future (via
  \(\mathbf{x}_{t:T}\)).
The transition model does not observe \(\mathbf{x}_{t : T}\), thus by jointly training the
  transition model and the encoder, \cref{eq:term_kl_trans} encourages the encoder to produce latent
  states that are predictable under the dynamics learned by the transition model.
Moreover, as our diffusion model is highly expressive, the regularization of \(\mathbf{z}_t\) is no
  longer constrained towards a strict Gaussian prior.
\vspace{-1.5em}
\paragraph{Parameterization and Training.}
The design of the neural networks parameterizing these densities is detailed in \cref{apd:model}.
We follow standard practice~\citep{girin2022dynamical} in minimizing the negative ELBO with
  stochastic backpropagation.
To ensure the flow of gradients through the diffusion model to the VAE are normalized across noise
  levels, we use preconditioning~\citep{karras2022elucidating} as implemented in
  \cref{app:diff_reparameterization} to rewrite \cref{eq:term_diff_chain_body}.
Our full training algorithm, along with an explanation of how to compute each of the terms in the
  ELBO, can be found in \cref{app:implementation}.

\section{Related work}
\TODO{Create a section which directly explains how our method abstracts entire/special cases of a number of related works.
    This may need to be done post-probml, and regardless would likely need to live in the appendix.
    We would need to describe identity emissions, differences in how inference is performed,
      differences in how the latent space is learned, differences in the training objective, etc. }

      \TODO{Need to add some comments about use of NFs and related, per Mike H.
    comment about other ways to introduce expressivity.}

\textbf{Deep state space models.}
Similarly to other Deep State Space Models (DSSMs)~\citep{girin2022dynamical} such as the Deep
  Kalman Filter (DKF)~\citep{krishnan2015deep, krishnan2017structured}, our Diffusion-Driven State
  Space Model (DDSSM) parametrizes the transitions and emissions densities of a state space model
  with a neural network.
However, our model replaces the parametric (typically Gaussian) transition distribution of a DSSM
  with a diffusion model.
This design enables complex, potentially multi-modal transition dynamics (validated in
  \cref{sec:simulation}) and avoids the restrictive regularization of Gaussian
  transitions~\citep{karl2016deep}.
Integration with advances such as the extended Kalman filter VAE~\citep{klushyn2021latent} is left
  for future work.


\textbf{Diffusion models for temporal data.}
Some works~\citep{rasul2021autoregressive,tashiro2021csdi,kollovieh2023predict} apply diffusion
  directly in observation space, but the high computational cost of diffusion -- motivating latent
  diffusion for static data~\citep{rombach2022high} -- is exacerbated in time series due to the need
  to model both high-dimensional structure and temporal dynamics.
This has led to latent diffusion approaches for time
  series~\citep{qian2024timeldm,suh2025timeautodiff,liu2024align,feng2024latent}, which typically
  rely on two-stage training (autoencoder then diffusion), potentially yielding over-regularized
  latent spaces that discard information needed for generative
  dynamics~\citep{liu2024align,vahdat2021score,shmakov2023end}.
Our approach instead follows~\citet{vahdat2021score} and~\citet{shmakov2023end} by training
  diffusion end-to-end in latent space, extended here to the temporal setting.
For further details on related works, see~\cref{sec:extended_related_works}.

\TODO{Need to add some comments about use of NFs and related, per Mike H. comment about other ways to introduce expressivity.}

\AFTER{Create a section which directly explains how our method abstracts entire/special cases of a number of related works.
    This may need to be done post-probml, and regardless would likely need to live in the appendix.
    We would need to describe identity emissions, differences in how inference is performed,
    differences in how the latent space is learned, differences in the training objective, etc. }


\section{Simulation study: Bimodal Noise} \label{sec:simulation}
To demonstrate the advantage of flexible transitions, we implement a latent dependent-step random walk with bimodal noise
\begin{align*}
    z^{(i)}_t & = 0.9 z^{(i)}_{t-1} + 4s^{(i)}_t, \quad s^{(i)}_t \sim \text{Unif}(\{-1, 1\}). \\
    x^{(i)}_t & = z^{(i)}_t + \epsilon^{(i)}_t,   \qquad\quad\: \epsilon^{(i)}_t \sim \mathcal{N}(0, 0.2),
\end{align*}
where \(i=1,\ldots,N\) indexes i.i.d sequences, and \(t=1, \ldots, T\) indexes the time steps of those sequences.
We fix \(T=32\) and generate \(N=1024\) distinct sequences each for training, validation, and
  testing.

We fit three models to the data, fixing a latent dimension of 1.
The first is the DKF equipped with a VHP, and second is our DDSSM model.
As a na\"ive baseline we include a Last Observation Carried Forward (LOCF)
  strategy~\citep{shao2003last}, which carries forward the last observed value, i.e $\hat{x}^{(i)}_t
      = x^{(i)}_{t-1}$.
Details of the exact training procedure are in \cref{app:implementation}; model configurations and
  hyperparameter search procedure are provided in \cref{app:experiments}.
\begin{figure}[ht] \label{fig:jsd-report}
    \centering
    \setlength{\belowcaptionskip}{-6pt}
    \begin{subfigure}[b]{0.48\textwidth}
        \centering
        \includegraphics[width=\textwidth]{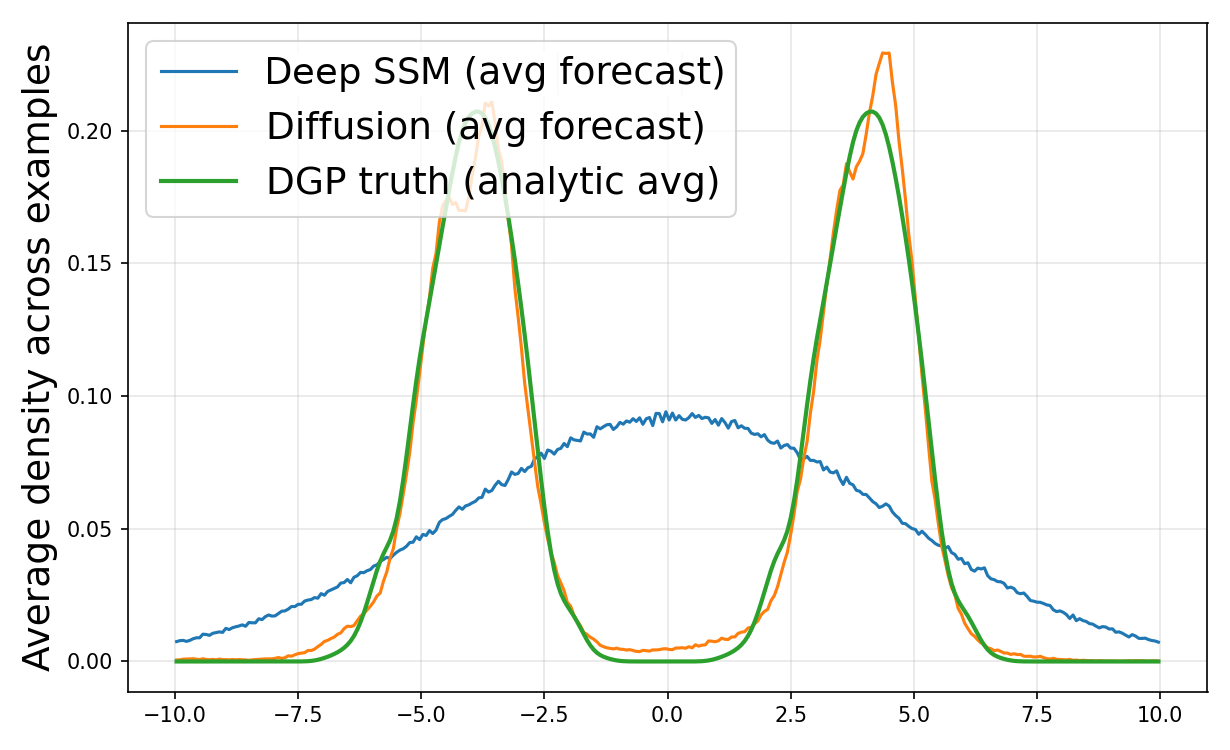}
        \caption{Predictive distributions, centered on \(x_{31}\)} \label{fig:jsd-report-preds}
    \end{subfigure}
    \begin{subfigure}[b]{0.48\textwidth}
        \centering
        \includegraphics[width=\textwidth]{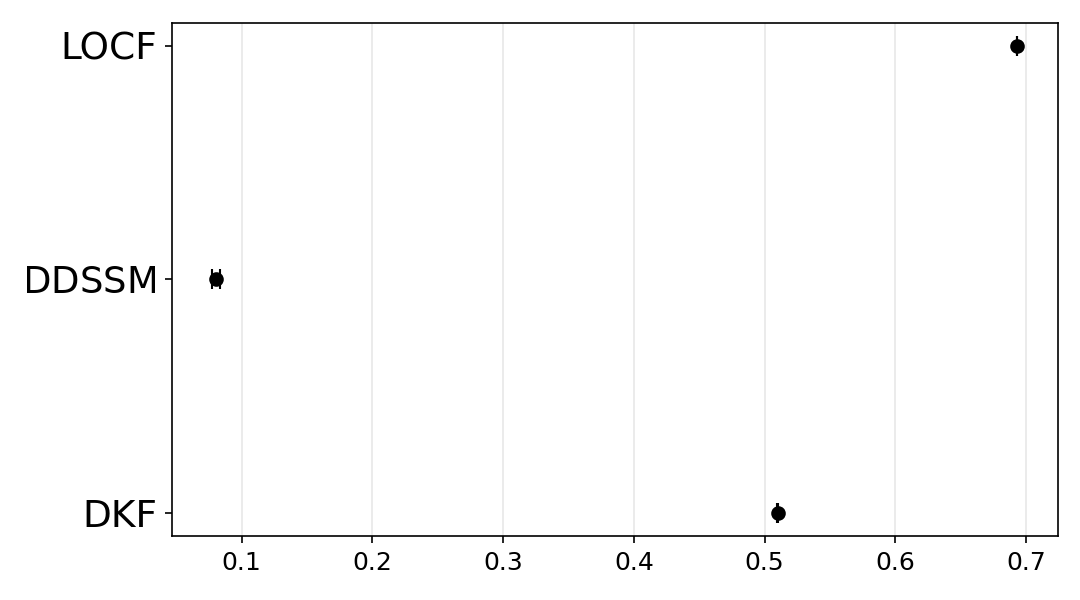}
        \caption{Jensen-Shannon Divergence} \label{fig:jsd-report-jsd}
    \end{subfigure}
    \caption{ On the left, the average predictive distribution of each model across the test set is plotted against the true distribution.
        On the right, the JSD is plotted as the mean across the test set for each model, with (small) error
          bars giving $\pm $ 1 standard error of the mean.
    }
\end{figure}
To evaluate the performance of the models,~we draw 1024 samples from each model.
Then, we compute the Jensen-Shannon Divergence (JSD) between the 1-step predictive distribution of
  each model and the true distribution of \(x^{(i)}_{32}\) given \(x^{(i)}_{1:31}\) for each example
  in the test set.
Each sample is drawn by first drawing a latent sample \(\hat{z}^{(i)}_{31}\) from the encoder, and
  then sampling the transition and emission distributions to produce \(\hat{x}^{(i)}_{32}\).
\Cref{fig:jsd-report-preds} reveals, as expected, that the DKF places probability mass
between the two modes of the true distribution, while the DDSSM is able to capture both modes.
Qualitatively, this results in a significantly lower JSD as shown in \cref{fig:jsd-report-jsd}.
\AFTER{Can you briefly explain \textit{why} the DKF handles the bimodal transition by covering both modes rather than choosing one?
    Does this trace back to an (explicit or implicit) inclusive KL term (as opposed to an exclusive KL
      term) in the objective?
    If so, it make take a few sentences or a paragraph to fully explain, but you could do that in the
      Appendix, summarize in 0.5-1 sentences here, and give a link to the relevant part of the Appendix.
}
To investigate the impacts of diffusion transitions on prediction, we plot the reconstructions and
  forecasts of two representative samples from the test set, chosen according to the median value of
  the JSD metric for the DDSSM model.
\AFTER{ We could quantify reconstruction MSE, by directly reporting the reconstruction loss (these are 1-1).
    Quality of "forecast" will probably need something else.
    Maybe see the D3SSM paper (I think that's the one, it does deep switching).
}
\Cref{fig:sim-results} shows that the DDSSM's trajectories are representative of the data, while the DKF's forecasts do not have a consistent pattern.
Notably, the DKF fails to provide good reconstructions of the observed data, supporting our claim
  that a Gaussian prior over the latent trajectories is harmful for learning latent
  representations.\TODO{What are the real-world implications of getting the \textit{form} of the
  predictive distribution correct (as opposed to simply the mean)?
}
\begin{figure}[H]
    \centering
    \setlength{\belowcaptionskip}{-6pt}
    \begin{subfigure}[b]{0.48\textwidth}
        \centering
        \includegraphics[width=\textwidth]{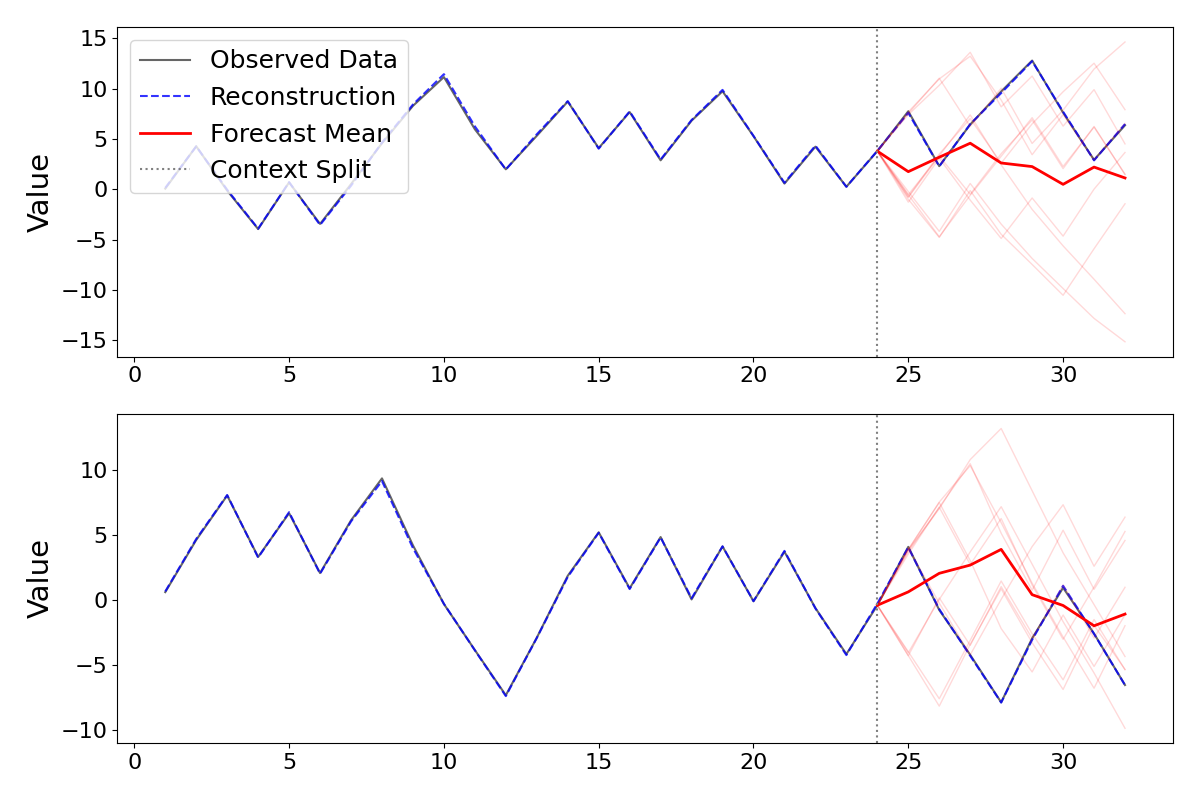}
        \caption{DDSSM} \label{fig:sim-results-diff}
    \end{subfigure}
    \begin{subfigure}[b]{0.48\textwidth}
        \centering
        \includegraphics[width=\textwidth]{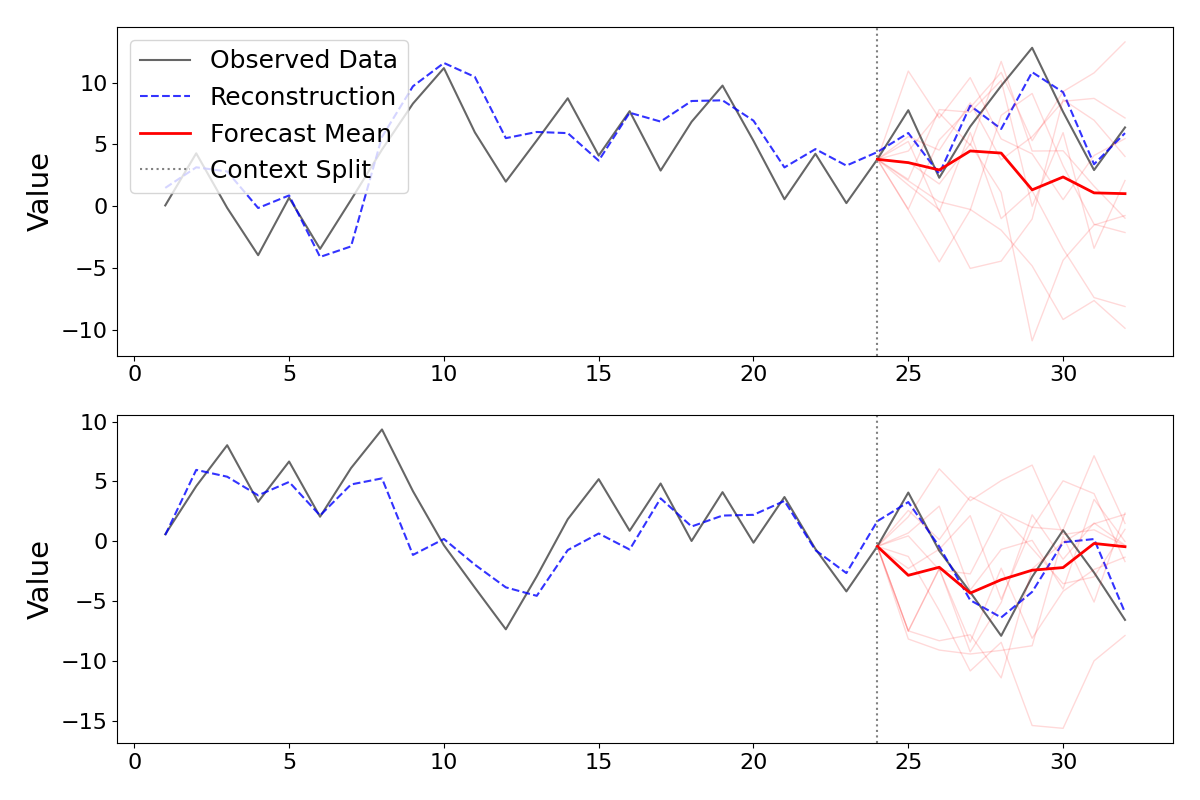}
        \caption{DKF} \label{fig:sim-results-gauss}
    \end{subfigure}
    \caption{
        Reconstructions and Forecasts.
        The dashed line shows \(\hat{x}^{(i)}_{1:32}\) for each model, computed by sampling
          \(\hat{z}^{(i)}_{1:32}\) from the encoder, and subsequently sampling \(\hat{x}^{(i)}_{1:32}\) from
          the decoder.
        Given the encoder's sample \(\hat{z}^{(i)}_{24}\), we generate trajectories of
          \(\hat{x}^{(i)}_{25:32}\) by sampling from the transitions and emissions.
        Trajectories are shown in light red, and the mean of these trajectories is in solid red.
    } \label{fig:sim-results}
\end{figure}


\section{Discussion}
In this work we have demonstrated that we can formulate latent diffusion models for time series by
  casting the problem as a state space model with a diffusion transition distribution.
This allows end-to-end training, allowing flexible reconstructions without penalizing the learning
  of the latent dynamics.
We aim to continue our work by applying the DDSSM to large-scale datasets, and by incorporating
  ideas from deep SSM literature to fully utilize the power of the diffusion transition distribution.



\clearpage

\section*{Acknowledgements}
\textbf{Funding.}
This material is based upon work supported in part by the National Science Foundation EPSCoR
  Cooperative Agreement OIA-2242802.
Any opinions, findings, and conclusions or recommendations expressed in this material are those of
  the author(s) and do not necessarily reflect the views of the National Science Foundation.

\textbf{Computing Resources.}
Computational efforts were performed on the Tempest High Performance Computing System, operated and
  supported by University Information Technology Research Cyberinfrastructure (RRID:SCR\_026229) at
  Montana State University.

\bibliography{diffusion}

\clearpage

\appendix

\section{State Space Model Derivation}
\subsection{Derivation of the Generative Model} \label{app:model_derivation}
Our goal is to learn a model of the conditional distribution \( p(\mathbf{x}_{1:T} |
  \mathbf{u}_{1:T}) \).
Following the general framework of~\citet{girin2022dynamical},
we introduce a sequence of latent variables \( \mathbf{z}_{1:T} = (\mathbf{z}_1, \mathbf{z}_2, \ldots, \mathbf{z}_T) \), where each \( \mathbf{z}_t \in \mathbb{R}^d \) is a \( d \)-dimensional vector,
and factorize the joint distribution \( p(\mathbf{x}_{1:T}, \mathbf{z}_{1:T} | \mathbf{u}_{1:T}) \) as
\begin{equation}
    \label{eq:genDVAE}
    p(\mathbf{x}_{1:T}, \mathbf{z}_{1:T} | \mathbf{u}_{1:T}) = \prod_{t=1}^T p(\mathbf{x}_t|\mathbf{x}_{1: t-1}, \mathbf{z}_{1: t}, \mathbf{u}_{1 : t})p(\mathbf{z}_t | \mathbf{x}_{1: t-1}, \mathbf{z}_{1: t-1}, \mathbf{u}_{1 : t}).
\end{equation}
We make the Markovian assumption that the latent state at time \(t\) only depends on a fixed-length history of \(j\) previous latent states and covariates, rather than the entire history, i.e.
\begin{equation}
    \label{eq:markov-latent}
    p\left(\mathbf{z}_t | \mathbf{z}_{1 : t-1} , \mathbf{u}_{1 : t}\right) = p\left(\mathbf{z}_t | \mathbf{z}_{t-j : t-1}, \mathbf{u}_{t-j : t}\right),
\end{equation}
for \(t > j\) where \(j\) is a fixed Markov order.

On the emissions model, we assume that the observation at time \(t\) only depends on a fixed-length history of \(j\) previous latent states and covariates, i.e.
\begin{equation}
    \label{eq:indep-emission}
    p(\mathbf{x}_t | \mathbf{x}_{1: t-1}, \mathbf{z}_{1: t}, \mathbf{u}_{1 : t }) = p(\mathbf{x}_t | \mathbf{z}_{t - j + 1: t}, \mathbf{u}_{t-j + 1 : t }).
\end{equation}
We explicitly choose the same conditioning set for the emissions model, arguing that the most
  recent \(j\) states carry the full information about the system's current state, and thus the most
  recent \(j\) states and covariates should be sufficient for predicting the current observation.
Alternatively we could choose a smaller order on the emissions as well; this is not a key aspect of
  our model and leave this choice to future work.

Lastly, we assume that the initial \(j\) latent states are generated from a separate initial state distribution, i.e.
\begin{equation}
    \label{eq:init-state}
    p(\mathbf{z}_{1:j}, \mathbf{x}_{1:j} | \mathbf{u}_{1:j}) = p_{\eta}(\mathbf{z}_{1:j} | \mathbf{u}_{1:j}) p_{\theta}(\mathbf{x}_{1:j} | \mathbf{z}_{1:j}, \mathbf{u}_{1:j}).
\end{equation}

Rewriting \cref{eq:genDVAE} with the above assumptions, we have
\begin{equation}
    \begin{aligned}
    \label{eq:kdiff_app}
    p(\mathbf{x}_{1:T}, \mathbf{z}_{1:T} | \mathbf{u}_{1:T})
    =
     & \underbrace{
        p_{\eta}(\mathbf{z}_{1:j} | \mathbf{u}_{1:j}) p_{\theta}(\mathbf{x}_{1:j} | \mathbf{z}_{1:j}, \mathbf{u}_{1:j})
    }_{\mathclap{\text{State Initialization}}} \\
     & \times
    \prod_{t=j+1}^{T}
    \underbrace{
        p^^t_\psi(\mathbf{z}_t | \mathbf{z}_{t-j:t-1}, \mathbf{u}_{t-j: t })\;
    }_{\mathclap{\text{Transition Density}}}
    \;\times\;
    \underbrace{
        p^^t_\theta(\mathbf{x}_t | \mathbf{z}_{t-j+1:t}, \mathbf{u}_{t-j + 1 : t })
    }_{\mathclap{\text{Emissions Model}}},
    \end{aligned}
\end{equation}
where we have indicated the parameters as explained in the main text.

The inclusion of \( j \) as a hyperparameter for our model is inspired by the GenCast model
  of~\cite{price2025probabilistic}, which found that a second-order Markov assumption (in observation
  space) was more effective for forecasting in contrast to a first-order assumption.
We note that another option for increasing the Markov order would be to use a higher-dimensional
  latent state, where the transition density is still first-order in the latent space, but the latent
  state at time \(t\) includes the previous \(j\) states.
However, the representation here allows for independently controlling the Markov order and the
  dimension of the latent state.
Another motivation for this representation is that it allows us to view our method as a direct
  generalization of autoregressive diffusion models.
For example, the model proposed by~\citet{price2025probabilistic} can be viewed as our model with
  \(j=2\) and identity emissions.

\AFTER{make a note about the relationship beteen \(j\) and the dimension of the latent space \(d\)?
    As an example, an AR(1) model can be extended to an AR(p) by augmenting the state space to include
      the previous \(p\) states.
}

\section{Modeling the Transition Density with Diffusion Models}
To model the transition density \(p(\mathbf{z}_t | \mathbf{z}_{t-j : t-1})\) as a diffusion model,
  we must introduce the intermediate states used in the diffusion process.
To simplify notation, we will denote the history of latent states and covariates as \( \mathbf{c}_t
  := (\mathbf{z}_{t-j:t-1}, \mathbf{u}_{t-j:t})\), as this conditioning set remains fixed across the
  diffusion steps for a given time \(t\).

\cref{eq:diffnot} shows that we may write the transition density as
\begin{equation}
    \label{eq:difftrans}
    p^^t_\psi(\mathbf{z}_t | \mathbf{c}_t)
    =
    \int p^^t_\psi(\mathbf{z}_t^0 | \mathbf{z}_t^1, \mathbf{c}_t) \;
    \prod_{k=2}^K p^^t_\psi(\mathbf{z}_t^{k-1} | \mathbf{z}_t^k, \mathbf{c}_t) \;
    p(\mathbf{z}_t^K) \;
    d\mathbf{z}_t^{1:K},
\end{equation}
where \( \mathbf{z}_t^0 := \mathbf{z}_t \) is the clean latent state, \( p(\mathbf{z}_t^K)\) has standard normal density, and \( p^^t_\psi(\mathbf{z}_t^{k-1} | \mathbf{z}_t^k, \mathbf{z}_{t-j:t-1}) \) is the learned reverse process of the diffusion model.

Thus, we augment \cref{eq:kdiff} to include the intermediate diffusion states \( \mathbf{z}_t^{1:K}
  \) for \( t > j \).

The augmented joint distribution is
\begin{align}
     & p(\mathbf{x}_{1:T}, \mathbf{z}_{1:T}, \left\{\mathbf{z}_t^{1:K}\right\}_{t=j+1}^T | \mathbf{u}_{1:T}) \nonumber \\
     & \quad=p_{\eta,\theta}(\mathbf{z}_{1:j}, \mathbf{x}_{1:j} | \mathbf{u}_{1:j})
    \; \times \;
    \prod_{t=j+1}^{T}
    \Bigg[
        p(\mathbf{z}_t^K) \prod_{k=1}^K p^^t_\psi(\mathbf{z}_t^{k-1} | \mathbf{z}_t^k, \mathbf{c}_t) ) \times
        p^^t_\theta(\mathbf{x}_t | \mathbf{z}_{t-j+1:t}, \mathbf{u}_{t-j + 1 : t })
        \Bigg]. \label{eq:kdiff_augmented}
\end{align}

We may recover \cref{eq:kdiff} by marginalizing out the diffusion states \( \mathbf{z}_t^{1:K} \)
  for \( t > j \).

\section{ELBO Derivation}
\subsection{Posterior Factorization} \label{app:post_fact}
Learning our generative model in \cref{eq:kdiff_augmented} requires performing inference over the
  latent variables \( \mathbf{z}_{1:T} \) and \( \{\mathbf{z}_t^{1:K}\}_{t=j+1}^T \).
To do so, we follow works from the deep state-space modeling literature~\citep{krishnan2015deep,
      krishnan2017structured, girin2022dynamical} and train under the VAE framework which allows jointly
  learning the generative model and the inference model.
The primary objective is to maximize the marginal log-likelihood \( \log p(\mathbf{x}_{1:T} |
  \mathbf{u}_{1:T}) \) with respect to the parameters of the generative model.
As is typical with latent-variable models, exact inference depends on the smoothing posterior
distribution
\begin{equation}
    p(\mathbf{z}_{1:T}, \{\mathbf{z}_t^{1:K}\}_{t=j+1}^T \mid \mathbf{x}_{1:T},
    \mathbf{u}_{1:T}),
\end{equation} whose normalization involves integrating over a high-dimensional latent space
and is generally intractable.

To obtain a tractable objective, we introduce a variational distribution
\begin{equation}
    q_\phi(\mathbf{z}_{1:T}, \left\{\mathbf{z}_t^{1:K}\right\}_{t=j+1}^T | \mathbf{x}_{1:T},
    \mathbf{u}_{1:T})
\end{equation}
to approximate
the true posterior distribution.
It is important to choose a family of variational distributions which matches the conditional
  independence structure of the true posterior distribution~\citep{girin2022dynamical}.
If the variational distribution misses important dependencies required to capture the true
  posterior, then the variational distribution will not be able to approximate the true posterior
  well, resulting in a looser bound on the marginal log-likelihood.

Therefore, we examine the form of the true posterior distribution \( p(\mathbf{z}_{1:T},
  \{\mathbf{z}_t^{1:K}\}_{t=j+1}^T | \mathbf{x}_{1:T}, \mathbf{u}_{1:T}) \) to determine the
  appropriate factorization for the variational distribution.
The conditional independence structure of the true posterior distribution can be determined by
  applying d-separation to the directed graphical model corresponding to the generative model in
  \cref{eq:kdiff_augmented}, resulting in the following lemma.

\begin{lemma} \label{lem:dsep}
    For the generative model in \cref{eq:kdiff_augmented}, the exact smoothing posterior factorizes as
    \begin{align}
        \label{eq:true_post_dsep}
         & p(\mathbf{z}_{1:T}, \{\mathbf{z}_t^{1:K}\}_{t=j+1}^T | \mathbf{x}_{1:T}, \mathbf{u}_{1:T}) \\
         & \quad=
        \left[
            \prod_{t=1}^{j} p(\mathbf{z}_t | \mathbf{x}_{t : T}, \mathbf{z}_{1: t-1}, \mathbf{u}_{1 : t})
            \right] \times
        \left[
            \prod_{t=j+1}^{T} p(\mathbf{z}_t | \mathbf{x}_{t:T}, \mathbf{z}_{t-j: t-1}, \mathbf{u}_{t-j : t})
        \right] \\
         & \qquad\qquad\qquad\times
        \left[
        \prod_{t=j+1}^{T} p(\mathbf{z}_t^{1:K} | \mathbf{z}_t, \mathbf{z}_{t-j: t-1}, \mathbf{u}_{t-j : t})
        \right],
    \end{align}
    where
    \begin{equation}
        p(\mathbf{z}_t^{1:K} | \mathbf{z}_t, \mathbf{z}_{t-j: t-1}, \mathbf{u}_{t-j : t})
        \propto
        p(\mathbf{z}_t^K) \prod_{k=1}^K p^^t_\psi(\mathbf{z}_t^{k-1} | \mathbf{z}_t^k, \mathbf{z}_{t-j:t-1}, \mathbf{u}_{t-j : t }), \label{eq:postfact_diff}
    \end{equation}
    with \( \mathbf{z}_t^0 := \mathbf{z}_t \).
\end{lemma}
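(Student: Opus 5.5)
I will first apply the chain rule to the smoothing posterior, ordering the variables as \(\mathbf{z}_1, \ldots, \mathbf{z}_T\) followed by the diffusion blocks \(\mathbf{z}_{j+1}^{1:K}, \ldots, \mathbf{z}_T^{1:K}\). This gives the exact identity
\[
p(\mathbf{z}_{1:T}, \{\mathbf{z}_t^{1:K}\} | \mathbf{x}_{1:T}, \mathbf{u}_{1:T}) = \prod_{t=1}^T p(\mathbf{z}_t | \mathbf{z}_{1:t-1}, \mathbf{x}_{1:T}, \mathbf{u}_{1:T}) \prod_{t=j+1}^T p(\mathbf{z}_t^{1:K} | \mathbf{z}_{1:T}, \{\mathbf{z}_s^{1:K}\}_{s<t}, \mathbf{x}_{1:T}, \mathbf{u}_{1:T}).
\]
Each conditioning set is then pruned with d-separation on the DAG of \cref{eq:kdiff_augmented}. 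In that DAG, \(\mathbf{u}\) nodes are roots. The chain \(\mathbf{z}_t^K \to \cdots \to \mathbf{z}_t^0 = \mathbf{z}_t\) receives edges from \(\mathbf{c}_t\) at every step. Each \(\mathbf{x}_t\) has parents \(\mathbf{z}_{t-j+1:t}, \mathbf{u}_{t-j+1:t}\).

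\textbf{Diffusion blocks.} The only edges touching \(\mathbf{z}_t^{1:K}\), other than those internal to the chain, come from \(\mathbf{c}_t\), and the chain's only outgoing edge to the rest of the graph goes into \(\mathbf{z}_t\). So the Markov blanket of \(\mathbf{z}_t^{1:K}\) is \(\{\mathbf{z}_t\} \cup \mathbf{c}_t\); \(\mathbf{z}_t\)'s other parents are exactly \(\mathbf{c}_t\). Conditioning on this blanket removes all other variables, which gives the third bracket. The explicit form \cref{eq:postfact_diff} then follows from Bayes' rule: the joint of \(\mathbf{z}_t^{0:K}\) given \(\mathbf{c}_t\) is \cref{eq:transition_augmented}, and dividing by the marginal \(p(\mathbf{z}_t | \mathbf{c}_t)\) (constant in \(\mathbf{z}_t^{1:K}\)) yields proportionality.

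\textbf{Latent states.} Here the claim is that, given \(\mathbf{z}_{t-j:t-1}\) and \(\mathbf{u}_{t-j:t}\), the state \(\mathbf{z}_t\) is independent of \(\mathbf{x}_{1:t-1}\), \(\mathbf{z}_{1:t-j-1}\), and the remaining covariates. The cleanest route is to marginalize the diffusion blocks, recovering the order-\(j\) SSM \cref{eq:kdiff}, and then do a direct factorization argument. Write the joint as \(A \cdot B\), where \(A\) collects all factors involving only indices \(< t\) and \(B\) collects the factors \(p(\mathbf{z}_s | \mathbf{c}_s)\) and \(p(\mathbf{x}_s | \cdot)\) for \(s \ge t\). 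The factors in \(B\) depend on the past only through \(\mathbf{z}_{t-j:t-1}\) and the \(\mathbf{u}\)'s with index \(\ge t-j\). Integrating \(B\) over \(\mathbf{z}_{t+1:T}\) and normalizing over \(\mathbf{z}_t\) therefore gives a conditional that depends only on \(\mathbf{z}_{t-j:t-1}, \mathbf{x}_{t:T}, \mathbf{u}_{t-j:T}\).

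\textbf{Main obstacle.} The delicate part is bookkeeping at the boundary. First, the emission \(\mathbf{x}_{t-1}\) has parents \(\mathbf{z}_{t-j:t-1}\), so it belongs to \(A\) and is fully blocked by conditioning on those states. Second, future covariates \(\mathbf{u}_{t+1:T}\) appear in \(B\), so strictly the conditional also depends on them. I would handle this either by noting that \(\mathbf{u}\) is treated as given throughout and the statement's index sets are shorthand, or by accepting that the written \(\mathbf{u}_{t-j:t}\) must be read as including future covariates. I would flag this explicitly rather than claim a false independence. For \(t \le j\), the same argument applies with the full history \(\mathbf{z}_{1:t-1}\) retained, since the initialization density \(p_\eta(\mathbf{z}_{1:j} | \mathbf{u}_{1:j})\) does not factor further; this gives the first bracket.
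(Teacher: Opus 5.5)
Your proof is correct, and its skeleton is the paper's: the same chain-rule ordering ($\mathbf{z}_1,\ldots,\mathbf{z}_T$, then the diffusion blocks), the same pruning of the diffusion-block conditionals, and the same Bayes-rule step for \cref{eq:postfact_diff}. Your Markov blanket $\{\mathbf{z}_t\}\cup\mathbf{c}_t$ is a precise version of the paper's claim that every path out of $\mathbf{z}_t^{1:K}$ passes through $\mathbf{z}_{t-j:t}$.

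Where you differ is the reduction of $p(\mathbf{z}_t \mid \mathbf{z}_{1:t-1}, \mathbf{x}_{1:T}, \mathbf{u}_{1:T})$. The paper invokes the prior $j$-th order Markov property and asserts that the influence of $\mathbf{x}_{1:t-1}$ is blocked by $\mathbf{z}_{t-j:t-1}$. You instead marginalize the chains, split the joint into past factors $A$ and future factors $B$, and let $A$ cancel. The graphical assertion is shorter. Your computation is self-contained and exhibits the actual dependence set, which is how you catch the covariate issue that the paper's proof passes over.

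Your caveat is right. With $\mathbf{x}_{t+1}$ observed and $\mathbf{z}_{t+1}$ unobserved, the path $\mathbf{z}_t \to \mathbf{z}_{t+1} \to \mathbf{x}_{t+1} \leftarrow \mathbf{u}_{t+1}$ is active. So the smoothing factors cannot drop $\mathbf{u}_{t+1:T}$; generically they depend on it, e.g.\ in a linear Gaussian model. The index sets $\mathbf{u}_{t-j:t}$ and $\mathbf{u}_{1:t}$ in the first two brackets are therefore valid only as shorthand for conditioning on all of $\mathbf{u}_{1:T}$. Only the diffusion-block factor genuinely needs just $\mathbf{u}_{t-j:t}$.

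Two steps you leave implicit are routine:
\begin{itemize}
\item For $t\le j$, putting $\mathbf{x}_{1:t-1}$ into $A$ uses that the initialization emissions factor per step as in \cref{eq:indep-emission}. Hence $\mathbf{x}_s$ with $s<t$ never involves $\mathbf{z}_t$.
\item For $t>j$, once the chain-rule conditional is shown to be a function $g$ of $(\mathbf{z}_t,\mathbf{z}_{t-j:t-1},\mathbf{x}_{t:T},\mathbf{u}_{1:T})$ alone, averaging over $(\mathbf{z}_{1:t-j-1},\mathbf{x}_{1:t-1})$ identifies $g$ with $p(\mathbf{z}_t\mid\mathbf{z}_{t-j:t-1},\mathbf{x}_{t:T},\mathbf{u}_{1:T})$.
\end{itemize}
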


\TODO{Does this lemma need some t superscripts?}

\AFTER{Mike: Check Proof.}

\begin{proof}
    Let the reverse-time markov chain \( \mathbf{z}_t^K\to \cdots \to \mathbf{z}_t^1\to \mathbf{z}_t^0
      \) be parameterized by the time-homogeneous transition densities \( p^^t_\psi(\mathbf{z}_t^{k-1} |
      \mathbf{z}_t^k, \mathbf{z}_{t-j:t-1}, \mathbf{u}_{t-j : t }) \).

    Applying Bayes rule to \( p(\mathbf{z}_{1:T}, \{\mathbf{z}_t^{1:K}\}_{t=j+1}^T | \mathbf{x}_{1:T}, \mathbf{u}_{1:T}) \), we have
    \begin{align*}
               & p(\mathbf{z}_{1:T}, \{\mathbf{z}_t^{1:K}\}_{t=j+1}^T | \mathbf{x}_{1:T}, \mathbf{u}_{1:T}) \\
        \qquad & \quad=
        \frac{p(\mathbf{x}_{1:T}, \mathbf{z}_{1:T}, \{\mathbf{z}_t^{1:K}\}_{t=j+1}^T | \mathbf{u}_{1:T})}
        {p(\mathbf{x}_{1:T} | \mathbf{u}_{1:T})}
        \\
        \qquad & \quad\propto
        p_{\eta,\theta}(\mathbf{z}_{1:j}, \mathbf{x}_{1:j} | \mathbf{u}_{1:j}) \\
        \qquad & \qquad \qquad \times \;\prod_{t=j+1}^{T}
        \Bigg[
            p(\mathbf{z}_t^K) \prod_{k=1}^K p^^t_\psi(\mathbf{z}_t^{k-1} | \mathbf{z}_t^k, \mathbf{c}_t) ) \cdot
            p^^t_\theta(\mathbf{x}_t | \mathbf{z}_{t-j+1:t}, \mathbf{u}_{t-j + 1 : t })
            \Bigg].
    \end{align*}
    This induces a directed acyclic graph (DAG) over the variables \( \mathbf{x}_{1:T},
      \mathbf{z}_{1:T}, \{\mathbf{z}_t^{1:K}\}_{t=j+1}^T, \mathbf{u}_{1:T} \).
    \AFTER{ add dag. }

    Applying the chain rule to the desired posterior form, we have
    \begin{align*}
         & p(\mathbf{z}_{1:T}, \{\mathbf{z}_t^{1:K}\}_{t=j+1}^{T} \mid \mathbf{x}_{1:T}, \mathbf{u}_{1:T})
        \;\propto\;
        \prod_{t=1}^{j} p(\mathbf{z}_t \mid \mathbf{x}_{1 : T}, \mathbf{z}_{1:t-1}, \mathbf{u}_{1 : T}) \\
         & \qquad\qquad\cdot
        \prod_{t=j+1}^{T} p(\mathbf{z}_t \mid \mathbf{x}_{1:T}, \mathbf{z}_{1:t-1}, \mathbf{u}_{1:T})\;
        \cdot\;
        \prod_{t=j+1}^{T} p(\mathbf{z}_t^{1:K} \mid \mathbf{x}_{1:T}, \mathbf{u}_{1:T}, \mathbf{z}_{1:T}, \{\mathbf{z}_s^{1:K}\}_{s< t}).
    \end{align*}

    In the augmented DAG, any path from \(\mathbf{z}_t^{1:K}\) to \(\{\mathbf{x}_s\}_{s=1}^{T}\) or to
      latents outside of \( \mathbf{c}_t \) must pass through \( (\mathbf{z}_{t-j : t}) \).
    Hence by d-separation, \[ p(\mathbf{z}_t^{1:K} \mid \mathbf{x}_{1:T}, \mathbf{u}_{1:T},
      \mathbf{z}_{1:T}, \{\mathbf{z}_s^{1:K}\}_{s< t}) \;=\; p(\mathbf{z}_t^{1:K} \mid \mathbf{z}_t,
      \mathbf{c}_t).
    \]
    Moreover, by the Markov structure of the reverse chain, and \(p(\mathbf{z}_t^K)\) being a prior independent of \(\mathbf{c}_t\),
    \[
        p(\mathbf{z}_t^{1:K}\mid \mathbf{z}_t, \mathbf{c}_t)
        \;\propto\;
        p(\mathbf{z}_t^K)\,\prod_{k=1}^{K} p^^t_\psi(\mathbf{z}_t^{k-1}\mid \mathbf{z}_t^{k}, \mathbf{c}_t),
        \qquad \mathbf{z}_t^{0}\equiv \mathbf{z}_t,
    \]
    with the same parameter set \(\psi\) for all \(t\) (time-homogeneous in \(t\)).

    For \( t> j \), by the \(j\)-th order Markov property, \(\mathbf{z}_t \perp\!
    \!\!\perp \mathbf{z}_{1:t-j-1}\mid (\mathbf{z}_{t-j:t-1}, \mathbf{u}_{t-j:t})\).
    Conditioning on all observations, any influence of \(\mathbf{x}_{1:t-1}\) on \(\mathbf{z}_t\) is
      blocked by the parents \(\mathbf{z}_{t-j:t-1}\); thus \[ \mathbf{z}_t \;\perp\!
    \!\!\perp\; \mathbf{x}_{1:t-1}
    \;|\; (\mathbf{z}_{t-j:t-1}, \mathbf{u}_{t-j:t}, \mathbf{x}_{t:T}).
    \]
    Therefore,
    \[
        p(\mathbf{z}_t \mid \mathbf{x}_{1:T}, \mathbf{z}_{1:t-1}, \mathbf{u}_{1:T})
        \;=\;
        p(\mathbf{z}_t \mid \mathbf{x}_{t:T}, \mathbf{z}_{t-j:t-1}, \mathbf{u}_{t-j:t}),
        \qquad t>j.
    \]
    For \( t \leq j \), we similarly have
    \[
        p(\mathbf{z}_t \mid \mathbf{x}_{1:T}, \mathbf{z}_{1:t-1}, \mathbf{u}_{1:T})
        \;=\;
        p(\mathbf{z}_t \mid \mathbf{x}_{t:T}, \mathbf{z}_{1:t-1}, \mathbf{u}_{1 : t}).
    \]

    Substituting the two reductions back into the chain-rule factorization yields
    \begin{align*}
         & p(\mathbf{z}_{1:T}, \{\mathbf{z}_t^{1:K}\}_{t=j+1}^{T} \mid \mathbf{x}_{1:T}, \mathbf{u}_{1:T})
        \;\propto\;
        \Bigg[\prod_{t=1}^{j} p(\mathbf{z}_t \mid \mathbf{x}_{t : T}, \mathbf{z}_{1:t-1}, \mathbf{u}_{1:t})\Bigg]\;
        \cdot\;
        \Bigg[\prod_{t=j+1}^{T} p(\mathbf{z}_t \mid \mathbf{x}_{t:T}, \mathbf{z}_{t-j:t-1}, \mathbf{u}_{t-j:t})\Bigg] \\
         & \hspace{7.3cm}\cdot\;
        \Bigg[\prod_{t=j+1}^{T} p(\mathbf{z}_t^{1:K}\mid \mathbf{z}_t, \mathbf{c}_t)\Bigg].
    \end{align*}
    Re-normalizing both sides gives the stated posterior factorization.
\end{proof}

Once again, if we marginalize the intermediate diffusion states \( \mathbf{z}_t^{1:K} \),
  \cref{eq:postfact_diff} recovers the original transition density \( p^^t_\psi(\mathbf{z}_t |
  \mathbf{z}_{t-j:t-1}, \mathbf{u}_{t-j : t }) \).
This serves to highlight that there are two perspectives to our model.
First, we may view our model as replacing the transition density in a standard state-space model
  with a diffusion model, which allows for more flexible transitions.
Alternatively, we may view our model as a state-space model with an augmented latent space which
  includes the intermediate diffusion states, where marginalization of the intermediate states
  recovers the more-general state-space model in \cref{eq:kdiff}.

The optimal variational posterior factorizes identically to the true
  posterior~\citep{wainwright2008graphical}.
Hence, given \cref{lem:dsep}, we assume the following factorization for the variational posterior density \( q_\phi(\mathbf{z}_{1:T}, \{\mathbf{z}_t^{1:K}\}_{t=j+1}^T | \mathbf{x}_{1:T}, \mathbf{u}_{1:T}) \) without making any approximation other than the dependence on $\phi$:
\begin{align}
    \label{eq:var_post}
     & q_\phi(\mathbf{z}_{1:T},
    \{\mathbf{z}_t^{1:K}\}_{t=j+1}^T | \mathbf{x}_{1:T}, \mathbf{u}_{1:T})
    =
    \underbrace{
    q_\phi\!\left(\mathbf{z}_{1 : j} | \mathbf{x}_{1 : T}, \mathbf{u}_{1 : j}\right) }_{\mathclap{\text{State Initialization}}} \\
     & \;\times\;
    \underbrace{\prod_{t=j+1}^{T}
        q^^t_\phi\!\left(\mathbf{z}_t |\, \mathbf{z}_{t-j:t-1},\mathbf{u}_{t-j  : t}, \mathbf{x}_{t:T}\right)
    }_{\mathclap{\text{Smoothed Transitions}}}
    \;\times\;
    \underbrace{\prod_{t=j+1}^{T}
    q_{\text{chain}}(\mathbf{z}_t^{1:K} | \mathbf{z}_t)
    }_{\mathclap{\text{Diffusion Noising Process}}}.
\end{align}

Note the dependence of the true and variational posteriors of the latent state \( \mathbf{z}_t \)
  on the future observations \( \mathbf{x}_{t:T} \); the same structure is used in the smoothing
  posterior of the DKF~\citep{krishnan2017structured} and other works on deep state-space
  models~\citep{girin2022dynamical}.
To implement this dependence, we follow~\citet{krishnan2017structured} and employ a neural-network
  to summarize the future observations \( \mathbf{x}_{t:T} \) for each timestep \(t\) into a
  fixed-dimensional vector \(\mathbf{h}_t\).
We define
\begin{equation}
    \label{eq:future_summary_def}
    \left(\mathbf{h}_T, \mathbf{h}_{T-1}, \ldots \mathbf{h}_1\right) = F_\phi(\text{concat}(\mathbf{x}_{1 : T}))
\end{equation}
The implementation of \(F_\phi\) used by~\citet{krishnan2017structured} is a backward RNN, which
  produces a sequence of hidden states \( \mathbf{h}_T, \mathbf{h}_{T-1}, \ldots, \mathbf{h}_1 \).
By design each \(\mathbf{h}_t\) is intended to summarize the future observations for each time
  step.
We employ a similar implementation, for which details can be found in
  \cref{app:encoder_architecture}.

\subsection{ELBO} \label{app:elbo_full_derive}
We now derive an evidence lower bound (ELBO) for our model, by finding a lower bound on the
  marginal log-likelihood \( \log p(\mathbf{x}_{1:T} | \mathbf{u}_{1 : T}) \).
We have
\begin{align*}
    \log p\left(\mathbf{x}_{1:T} | \mathbf{u}_{1 : T}\right)
     & = \log \int p\left(\mathbf{x}_{1:T}, \mathbf{z}_{1:T}, \left\{\mathbf{z}_t^{1:K}\right\}_{t=j+1}^T | \mathbf{u}_{1 : T}\right) d\mathbf{Z} \\
     & = \log \int q_\phi\left( \mathbf{z}_{1:T}, \left\{\mathbf{z}_t^{1:K}\right\}_{t=j+1}^T| \mathbf{x}_{1:T}, u_{1 : T}\right) \frac{ p\left(\mathbf{x}_{1:T}, \mathbf{z}_{1:T}, \left\{\mathbf{z}_t^{1:K}\right\}_{t=j+1}^T | \mathbf{u}_{1 : T}\right)}{q_\phi\left( \mathbf{z}_{1:T}, \left\{\mathbf{z}_t^{1:K}\right\}_{t=j+1}^T| \mathbf{x}_{1:T}, u_{1 : T}\right)} d\mathbf{Z} \\
     & \geq \mathbb{E}_{q_\phi\left( \mathbf{z}_{1:T}, \left\{\mathbf{z}_t^{1:K}\right\}_{t=j+1}^T| \mathbf{x}_{1:T}, \mathbf{u}_{1 : T}\right)} \Bigg[ \\
     & \qquad\quad\log p\left(\mathbf{x}_{1:T},
    \mathbf{z}_{1:T}, \left\{\mathbf{z}_t^{1:K}\right\}_{t=j+1}^T | \mathbf{u}_{1 : T}\right) \\
     & \qquad-\log  q_\phi\left( \mathbf{z}_{1:T}, \left\{\mathbf{z}_t^{1:K}\right\}_{t=j+1}^T| \mathbf{x}_{1:T}, \mathbf{u}_{1 : T}\right)\Bigg], \\
\end{align*}
where for shorthand we have written \( d\mathbf{Z} = d\mathbf{z}_{1:T} \; d\left\{\mathbf{z}_t^{1:K}\right\}_{t=j+1}^T \).
Substituting in \cref{eq:kdiff} and \cref{eq:var_post}, we have
\begin{align*}
     & \log p\left(\mathbf{x}_{1:T} | \mathbf{u}_{1 : T}\right) \\
     & \geq \mathbb{E}_{q_\phi} \Big[ \\
     & \qquad\quad\log p_{\eta,\theta}(\mathbf{z}_{1:j}, \mathbf{x}_{1:j} | \mathbf{u}_{1:j}) \\
     & \qquad+ \sum_{t=j+1}^T \log p^^t_\theta\left(\mathbf{x}_t | \mathbf{z}_{t-j+1 : t}, \mathbf{u}_{t-j + 1 : t})\right) \\
     & + \sum_{t=j+1}^T \log p(\mathbf{z}_t^K) + \sum_{t=j+1} \sum_{k=1}^K \log p^^t_\psi(\mathbf{z}_t^{k-1} | \mathbf{z}_t^k, \mathbf{c}_t) ) \\
     & \qquad- \sum_{t=1}^j \log q_\phi\left(\mathbf{z}_{1 : j} | \mathbf{x}_{1 : T}, \mathbf{u}_{1 : t}\right) \\
     & \qquad- \sum_{t=j+1}^T \log q^^t_\phi\left(\mathbf{z}_t | \mathbf{x}_{t:T}, \mathbf{z}_{t-j : t-1}, \mathbf{u}_{t-j : t}\right) \\
     & \qquad- \sum_{t=j+1}^T \sum_{k=1}^K \log q(\mathbf{z}_t^{k} | \mathbf{z}_t^{k-1}) \Big]. \\
\end{align*}
Grouping terms,
\begin{align*}
     & \log p\left(\mathbf{x}_{1:T} | \mathbf{u}_{1 : T}\right) \nonumber \\
     & \geq \mathbb{E}_{q_\phi} \Big[ \nonumber \\
     & \qquad\quad\log p_{\eta,\theta}(\mathbf{z}_{1:j}, \mathbf{x}_{1:j} | \mathbf{u}_{1:j}) \nonumber \\
     & \qquad- \log q_\phi\left(\mathbf{z}_{1 : j} | \mathbf{x}_{1 : T}, \mathbf{u}_{1 : t}\right) \nonumber \\
     & \qquad+ \sum_{t=j+1}^T \log p^^t_\theta\left(\mathbf{x}_t | \mathbf{z}_{t-j+1 : t}, \mathbf{u}_{t-j + 1 : t})\right) \nonumber \\
     & + \sum_{t=j+1}^T \Big[\log p(\mathbf{z}_t^K) \nonumber \\
     & \qquad+ \sum_{k=1}^K \log p^^t_\psi(\mathbf{z}_t^{k-1} | \mathbf{z}_t^k, \mathbf{c}_t) - \log q(\mathbf{z}_t^{k} | \mathbf{z}_t^{k-1}) \Big]. \nonumber \\
     & \qquad- \sum_{t=j+1}^T \log q^^t_\phi\left(\mathbf{z}_t | \mathbf{x}_{t:T}, \mathbf{z}_{t-j : t-1}, \mathbf{u}_{t-j : t}\right) \nonumber \\
     & = \mathbb{E}_{q_\phi(\mathbf{z}_{1 : T} | \mathbf{x}_{1: T}, \mathbf{u}_{1: T})}
    \left[\log p_{\eta,\theta}(\mathbf{z}_{1:j}, \mathbf{x}_{1:j} | \mathbf{u}_{1:j}) - \log q_\phi\left(\mathbf{z}_{1 : j} | \mathbf{x}_{1 : T}, \mathbf{u}_{1 : t}\right)\right] \\
     & + \sum_{t=j+1}^T \mathbb{E}_{q_\phi(\mathbf{z}_{1:T} | \mathbf{x}_{1:T}, \mathbf{u}_{1 : T})} \Big[ \log p^^t_\theta\left(\mathbf{x}_t | \mathbf{z}_{t-j+1 : t}, \mathbf{u}_{t-j + 1 : t}\right)\Big] \\
     & + \sum_{t=j+1}^T \mathbb{E}_{q_\phi(\mathbf{z}_{1:T}, \left\{\mathbf{z}_t^{1:K}\right\}_{t=j+1}^T | \mathbf{x}_{1:T}, \mathbf{u}_{1 : T})} \Big[\log p(\mathbf{z}_t^K) + \sum_{k=1}^K \log p^^t_\psi(\mathbf{z}_t^{k-1} | \mathbf{z}_t^k, \mathbf{c}_t) - \log q(\mathbf{z}_t^{k} | \mathbf{z}_t^{k-1}) \Big] \\
     & - \sum_{t=j+1}^T \mathbb{E}_{q_\phi(\mathbf{z}_{1:T} | \mathbf{x}_{1:T}, \mathbf{u}_{1 : T})} \Big[\log q^^t_\phi\left(\mathbf{z}_t | \mathbf{x}_{t:T}, \mathbf{z}_{t-j : t-1}, \mathbf{u}_{t-j : t}\right)\Big]                                                                                             . \\
\end{align*}
Expanding \(\log p_{\eta,\theta}(\mathbf{z}_{1:j}, \mathbf{x}_{1:j} | \mathbf{u}_{1:j}) = \log p_\eta(\mathbf{z}_{1:j} | \mathbf{u}_{1 : j}) + \log p^^t_\theta( \mathbf{x}_{1:j} | \mathbf{z}_{1 : j},  \mathbf{u}_{1:j})\),
\begin{align*}
     & \log p\left(\mathbf{x}_{1:T} | \mathbf{u}_{1 : T}\right) \nonumber \\
     & \geq \mathbb{E}_{q_\phi(\mathbf{z}_{1 : T} | \mathbf{x}_{1: T}, \mathbf{u}_{1: T})}
    \left[\log p_\eta(\mathbf{z}_{1:j} | \mathbf{u}_{1 : j})  - \log q_\phi\left(\mathbf{z}_{1 : j} | \mathbf{x}_{1 : T}, \mathbf{u}_{1 : t}\right)\right] \\
     & + \sum_{t=1}^T \mathbb{E}_{q_\phi(\mathbf{z}_{1:T} | \mathbf{x}_{1:T}, \mathbf{u}_{1 : T})} \Big[ \log p^^t_\theta\left(\mathbf{x}_t | \mathbf{z}_{\max(t-j+1,1) : t}, \mathbf{u}_{\max(t-j + 1, 1) : t}\right)\Big] \\
     & + \sum_{t=j+1}^T \mathbb{E}_{q_\phi(\mathbf{z}_{1:T}, \left\{\mathbf{z}_t^{1:K}\right\}_{t=j+1}^T | \mathbf{x}_{1:T}, \mathbf{u}_{1 : T})} \Big[\log p(\mathbf{z}_t^K) + \sum_{k=1}^K \log p^^t_\psi(\mathbf{z}_t^{k-1} | \mathbf{z}_t^k, \mathbf{c}_t) - \log q(\mathbf{z}_t^{k} | \mathbf{z}_t^{k-1}) \Big] \\
     & - \sum_{t=j+1}^T \mathbb{E}_{q_\phi(\mathbf{z}_{1:T} | \mathbf{x}_{1:T}, \mathbf{u}_{1 : T})} \Big[\log q^^t_\phi\left(\mathbf{z}_t | \mathbf{x}_{t:T}, \mathbf{z}_{t-j : t-1}, \mathbf{u}_{t-j : t}\right)\Big] . \\
\end{align*}
To obtain a minimization objective we multiply by \(-1\) yielding the negative ELBO,

\begin{align}
     & -\log p\left(\mathbf{x}_{1:T} | \mathbf{u}_{1 : T}\right) \nonumber \\
     & \leq  \underbrace{\mathbb{E}_{q_\phi(\mathbf{z}_{1 : T} | \mathbf{x}_{1: T}, \mathbf{u}_{1: T})}
    \left[\log q_\phi\left(\mathbf{z}_{1 : j} | \mathbf{x}_{1 : T}, \mathbf{u}_{1 : t}\right) - \log p_{\eta}(\mathbf{z}_{1:j} | \mathbf{u}_{1:j})\right]}_{\text{Initialization Loss}} \label{eq:term_init_loss} \\
     & + \underbrace{\sum_{t=1}^T \mathbb{E}_{q_\phi(\mathbf{z}_{1:T} | \mathbf{x}_{1:T}, \mathbf{u}_{1 : T})} \Big[ - \log p^^t_\theta\left(\mathbf{x}_t | \mathbf{z}_{\max(t-j+1,1) : t}, \mathbf{u}_{\max(t-j + 1, 1) : t}\right)\Big]}_{\text{Reconstruction Loss}} \label{eq:term_seq_reconstruct} \\
     & + \underbrace{\sum_{t=j+1}^T \mathbb{E}_{q_\phi(\mathbf{z}_{1:T}, \left\{\mathbf{z}_t^{1:K}\right\}_{t=j+1}^T | \mathbf{x}_{1:T}, \mathbf{u}_{1 : T})} \Big[- \log p(\mathbf{z}_t^K) + \sum_{k=1}^K \log q(\mathbf{z}_t^{k} | \mathbf{z}_t^{k-1}) -\log p^^t_\psi(\mathbf{z}_t^{k-1} | \mathbf{z}_t^k, \mathbf{c}_t)\Big]}_{\text{Diffusion Chain}} \label{eq:term_diff_chain} \\
     & + \underbrace{\sum_{t=j+1}^T \mathbb{E}_{q_\phi(\mathbf{z}_{1:T} | \mathbf{x}_{1:T}, \mathbf{u}_{1 : T})} \Big[\log q^^t_\phi\left(\mathbf{z}_t | \mathbf{x}_{t:T}, \mathbf{z}_{t-j : t-1}, \mathbf{u}_{t-j : t}\right)\Big]}_{\text{Negative of Posterior Entropy}}. \label{eq:post_entropy}.
\end{align}

For convenience, we will refer to each of the four terms in
  \cref{eq:term_init_loss,eq:term_seq_reconstruct,eq:term_diff_chain,eq:post_entropy} as \(
  \mathcal{L}_{\text{init}}(\phi,\eta,\theta; \mathbf{x}_{1 : T}, \mathbf{u}_{1 : T}) \), \(
  \mathcal{L}_{\text{recon}}(\phi,\theta; \mathbf{x}_{1 : T}, \mathbf{u}_{1 : T}) \), \(
  \mathcal{L}_{\text{diff}}(\phi,\psi; \mathbf{x}_{1 : T}, \mathbf{u}_{1 : T}) \), and \(
  \mathcal{L}_{\text{entropy}}(\phi; \mathbf{x}_{1 : T}, \mathbf{u}_{1 : T}) \), respectively.

Yet again we show the marginalized perspective, where marginalization of the intermediate diffusion
  states \( \mathbf{z}_t^{1:K} \) in \cref{eq:term_diff_chain} recovers the transition density \(
  p^^t_\psi(\mathbf{z}_t | \mathbf{z}_{t-j : t-1}) \).
Combining \cref{eq:term_diff_chain} with \cref{eq:post_entropy} post-marginalization, we recover
the KL divergence between the smoothed transition posterior and the transition density,
\begin{equation}
    \label{eq:marginalized_kl}
    \KL{q^^t_\phi(\mathbf{z}_t | \mathbf{x}_{t:T}, \mathbf{z}_{t-j : t-1})}
    {p^^t_\psi(\mathbf{z}_t | \mathbf{z}_{t-j : t-1})},
\end{equation}
again making a \(j\)-th order extension of the form given by~\cite{krishnan2015deep, krishnan2017structured}.


\section{Diffusion Modeling}
\subsection{Background for Conditional Diffusion models} \label{app:diffusion_background}

Diffusion models are a class of generative models which learn to sample from some arbitrary data
  distribution by learning to reverse a gradual noising process.
In the conditional setting, we wish to learn a model of the form \( p(\mathbf{x} | \mathbf{c}) \),
  where \( \mathbf{c} \) is some conditioning information.
To formulate the diffusion objective, we first must define a forward noising process.
This process may be defined in either continuous or discrete time; we present the discrete-time
  case here.
In discrete-time, the forward noising process is a Markov chain which gradually adds Gaussian noise
  to the data.
As shown by~\cite{ho2020denoising}, the forward noising process first takes an initial data point
  \( \mathbf{z}^0 := \mathbf{z} \) sampled from the data, and then iteratively adds noise to produce
  a sequence of noisy latent states \( \mathbf{z}^1, \mathbf{z}^2, \ldots, \mathbf{z}^K \).
The corruption process for each step \( k \) is defined by
a transition density
\begin{equation*}
    q(\mathbf{z}^k | \mathbf{z}^{k-1}) = \mathcal{N}(\mathbf{z}^k; \sqrt{1 - \beta_k}\mathbf{z}^{k-1}, \beta_k\mathbf{I}),
\end{equation*}
where \( 0 < \beta_1, \beta_2, \ldots, \beta_K < 1 \) is a fixed noise schedule,

The noise schedule is chosen such that the terminal latent state \( \mathbf{z}^K \) is
  approximately distributed as \( \mathcal{N}(0,\mathbf{I}) \).

This forward process allows direct sampling of a latent \( \mathbf{z}_t^k \) for some time step \(
  t \) since \( q(\mathbf{z}_t^k| \mathbf{z}_t^{0}) \sim \mathcal{N}(\mathbf{z}_t^k ;
  \sqrt{\overline{\alpha}_k}\mathbf{z}_t^0, (1 - \overline{\alpha}_k)\mathbf{I})\), where \( \alpha_k
  = (1 - \beta_k) \) and \( \overline{\alpha}_k = \prod_{i=1}^k \alpha_i \).

For data \( \mathbf{z} \), a reversal process may be defined as
\begin{equation*}
    p^^t_\psi \left(\mathbf{z}^{k-1} | \mathbf{z}^{k}, \mathbf{c}\right),
\end{equation*}
such that
\begin{align}
    p^^t_\psi(\mathbf{z}^0 | \mathbf{c}) & = \int p^^t_\psi \left(\mathbf{z}^{0:K-1} | \mathbf{z}^{K}, \mathbf{c}\right) p(\mathbf{z}^K) d\mathbf{z}^{1:K} \nonumber \\
                                         & = \int \prod_{k=K}^1 p^^t_\psi \left(\mathbf{z}^{k-1} | \mathbf{z}^{k}, \mathbf{c}\right) p(\mathbf{z}^K) d\mathbf{z}^{1:K} \label{eq:diffnot}
\end{align}
where \(p^^t_\psi(\mathbf{z}^{k-1} | \mathbf{z}^k, \mathbf{c}) \) is learned by a neural network with parameters \( \psi \).
Typically, a single neural network with shared parameters \( \psi \) is used to learn the reverse
  process across all noise levels, with the noise level \( k \) fed in as an additional input to the
  network.

Training this model can be accomplished by minimizing the variational bound on the negative
  log-likelihood of the data, we provide details in \cref{app:elbo_full_derive} as this is key to our
  methodology.
Sampling from the diffusion model is accomplished by first sampling \( \mathbf{z}^K \sim
  \mathcal{N}(0,\mathbf{I}) \), and then sampling \( \mathbf{z}^{K-1}, \mathbf{z}^{K-2}, \ldots,
  \mathbf{z}^0 \) using the learned reverse process, however more efficient samplers are used in
  practice.
We refer the reader to~\citet{ho2020denoising, song2020score, karras2022elucidating} for more
  details on diffusion models.

\subsection{Diffusion Reparameterization} \label{app:diff_reparameterization}
We follow prior works and reparameterize the diffusion model in terms of a noise prediction network \( F_\psi(\cdot) \),
which predicts the noise added at each step of the diffusion process,
so that the diffusion loss may be written as
\begin{align}
    \mathcal{L}_{\widehat{\text{diff}}}(\phi,\psi; \mathbf{x}_{1 : T}, \mathbf{u}_{1 : T})
     & = \sum_{t=j+1}^T \mathbb{E}_{q_\phi(\mathbf{z}_{1:T} | \mathbf{x}_{1 : T}, \mathbf{u}_{1 : T})} \Big[
    \KL{q(\mathbf{z}_t^K | \mathbf{z}_t^0)}{p(\mathbf{z}_t^K)} \nonumber \\
     & \qquad+ \mathbb{E}_{\epsilon \sim \mathcal{N}(0,\mathbf{I}), k \sim \text{Unif}\{1, \ldots, K\}}
        \left[
            K\tilde{w}_k ||F_\psi(\mathbf{z}_t^k, \mathbf{c}_t, k) - \mathbf{y}(\mathbf{z}_t^0,k)||_2^2
            \right]
        \Big]. \label{eq:diff_loss}
\end{align}

The derivation and motivation for \cref{eq:diff_loss} is given in
  \cref{app:faithful_diff_elbo_proof}.
In the above equation, \( F_\psi \) denotes the neural network parameterizing the reverse diffusion
  process, conditioned on the noisy latent \( \mathbf{z}_t^k \), the Markov history \( \mathbf{c}_t
  \), and the diffusion step \( k \).
The target \( \mathbf{y}(\mathbf{z}_t^0, k) \) is a function of the clean latent \( \mathbf{z}_t^0
  \) and the preconditioned noisy latent, as defined in \cref{app:faithful_diff_elbo_proof} (see
  \cref{eq:diff_target_precond}).
The coefficient \( \tilde{w}_k \) weights the loss at each noise level and is given by
  \cref{eq:is-weight} and its subsequent scaling.

We have written the diffusion loss as \( \mathcal{L}_{\widehat{\text{diff}}} \) to distinguish it
  from the KL form in \cref{eq:marginalized_kl}.

\section{Derivation of the diffusion loss} \label{app:faithful_diff_elbo_proof}
\subsection{Derivation of Noise-Prediction Loss}

We now proceed with describing our specific loss for the diffusion transitions.
Notice that the diffusion model is trained under an expectation over the variational distribution \(q_\phi \).
This means that the loss of the diffusion model is informative to the encoder, so we must carefully
  consider the full ELBO objective during our derivation.

With the goal of maximizing the log-likelihood of \(p^^t_\psi(\mathbf{z}_t^0 | \mathbf{c}_t)\), we
  can derive a variational bound on the negative log-likelihood of the transition density.
We have
\begin{subequations}
    \begin{align}
         & -\log p^^t_\psi(\mathbf{z}_t^0 | \mathbf{c}_t) \nonumber \\
         & = - \log \int p^^t_\psi(\mathbf{z}_t^{0:K-1} | \mathbf{z}_t^K, \mathbf{c}_t)p(\mathbf{z}_t^K) d\mathbf{z}^{1:K}_t \label{eq:cond_derive_1} \\
         & = - \log \int q(\mathbf{z}_t^{1:K} | \mathbf{z}_t^0) \frac{p^^t_\psi(\mathbf{z}_t^{0:K-1} | \mathbf{z}_t^K, \mathbf{c}_t)p(\mathbf{z}_t^K)}{q(\mathbf{z}_t^{1:K} | \mathbf{z}_t^0)} d\mathbf{z}^{1:K}_t \\
         & \leq \mathbb{E}_{q(\mathbf{z}_t^{1:K} | \mathbf{z}_t^0)} \left[ - \log \frac{p^^t_\psi(\mathbf{z}_t^{0:K-1} | \mathbf{z}_t^K, \mathbf{c}_t)p(\mathbf{z}_t^K)}{q(\mathbf{z}_t^{1:K} | \mathbf{z}_t^0)} \right] \\
         & = \mathbb{E}_{q(\mathbf{z}_t^{1:K}|\mathbf{z}^0_t)}\left[
            -\log p(\mathbf{z}_t^K) - \sum_{k=1}^K \log \frac{p^^t_\psi(\mathbf{z}_t^{k-1} | \mathbf{z}_t^{k}, \mathbf{c}_t)}{q(\mathbf{z}_t^{k} | \mathbf{z}_t^{k-1}, \mathbf{z}_t^0)}
        \right] \\
         & = \mathbb{E}_{q(\mathbf{z}_t^{1:K}|\mathbf{z}^0_t)}\left[
            -\log p(\mathbf{z}_t^K) - \sum_{k=2}^K \log \frac{p^^t_\psi(\mathbf{z}_t^{k-1} | \mathbf{z}_t^{k}, \mathbf{c}_t)}{q(\mathbf{z}_t^{k} | \mathbf{z}_t^{k-1}, \mathbf{z}_t^0)} - \log \frac{p^^t_\psi(\mathbf{z}_t^{0} | \mathbf{z}_t^{1}, \mathbf{c}_t)}{q(\mathbf{z}_t^{1} | \mathbf{z}_t^{0})}
        \right] \\
         & = \mathbb{E}_{q(\mathbf{z}_t^{1:K}|\mathbf{z}^0_t)}\Big[
        -\log p(\mathbf{z}_t^K) - \sum_{k=2}^K \log \frac{p^^t_\psi(\mathbf{z}_t^{k-1} | \mathbf{z}_t^{k}, \mathbf{c}_t)}{q(\mathbf{z}_t^{k-1} | \mathbf{z}_t^{k}, \mathbf{z}_t^0)} \cdot \frac{q(\mathbf{z}_t^{k-1} | \mathbf{z}_t^0)}{q(\mathbf{z}_t^k | \mathbf{z}_t^0)} \nonumber \\
         & \qquad\qquad\qquad- \log \frac{p^^t_\psi(\mathbf{z}_t^{0} | \mathbf{z}_t^{1}, \mathbf{c}_t)}{q(\mathbf{z}_t^{1} | \mathbf{z}_t^{0})}
        \Big] \\
         & = \mathbb{E}_{q(\mathbf{z}_t^{1:K}|\mathbf{z}^0_t)}\Big[
        -\log \frac{p(\mathbf{z}_t^K)}{q(\mathbf{z}_t^K | \mathbf{z}_t^0)} - \sum_{k=2}^K \log \frac{p^^t_\psi(\mathbf{z}_t^{k-1} | \mathbf{z}_t^{k}, \mathbf{c}_t)}{q(\mathbf{z}_t^{k-1} | \mathbf{z}_t^{k}, \mathbf{z}_t^0)} - \log p^^t_\psi(\mathbf{z}_t^0 | \mathbf{z}_t^1, \mathbf{c}_t)   \Big] \\
         & = \KL{q(\mathbf{z}_t^K | \mathbf{z}_t^0)}{p(\mathbf{z}_t^K)} +
        \sum_{k=2}^K \mathbb{E}_{q(\mathbf{z}_t^k | \mathbf{z}_t^0)} \left[ \KL{q(\mathbf{z}_t^{k-1}|\mathbf{z}_t^k, \mathbf{z}^0_{t})}{p^^t_\psi(\mathbf{z}_t^{k-1} | \mathbf{z}_t^{k}, \mathbf{c}_t)} \right] \nonumber \\
         & \qquad\qquad\qquad-\mathbb{E}_{q(\mathbf{z}_t^{1}|\mathbf{z}^0_t)}\left[ \log p^^t_\psi(\mathbf{z}_t^0 | \mathbf{z}_t^1, \mathbf{c}_t) \right].\label{eq:cond_diff_elbo}
    \end{align}
\end{subequations}
This derivation is very similar to the one presented by~\citet{ho2020denoising}, except for the
  presence of conditioning information.
The only additional step needed is made on the first line, where we utilize that the forward
  process is designed so that \( \mathbf{z}_t^K \sim \mathcal{N}(\mathbf{0},\mathbf{I}) \).
This implies that \( p(\mathbf{z}_t^K | \mathbf{z}_{t-j : t-1}) = p(\mathbf{z}_t^K) \), allowing
  the simplification seen after applying the chain rule in \cref{eq:cond_derive_1}.
Also note that the use of conditioning information here is exactly the same in function as
  presented by~\citet{tashiro2021csdi}, and the included conditioning information is not limited to
  the previous latent states.

\Cref{eq:cond_diff_elbo} reveals that training the diffusion model is accomplished by
simply teaching the diffusion model to learn the reverse process by matching
the reverse transition density to the forward transition density at each noise scale.

The first term in \cref{eq:cond_diff_elbo} is constant with respect to \( \psi \), and solely
  depends on the noise schedule and the initial latent state \( \mathbf{z}_t^0 \).
Most works simply drop this term, since \( \mathbf{z}_t^0 \) in their case is usually data, and not
  trainable.
In our case \( \mathbf{z}_t^0 \) is sampled from the encoder, and therefore we must keep this first
  term to respect the true ELBO.

We denote this term as \( \mathcal{L}_{\text{forward}}(\mathbf{z}_t^0) \), and compute it in closed form as
\begin{align*}
    \label{eq:diff_term1}
    \KL{q(\mathbf{z}_t^K | \mathbf{z}_t^0)}{p(\mathbf{z}_t^K)}
     & = \KL{                                                                                                                              \mathcal{N}(\sqrt{\overline{\alpha}_K}\mathbf{z}_t^0, (1 - \overline{\alpha}_K)\mathbf{I})}{\mathcal{N}(0,\mathbf{I})} \\
     & = \frac{1}{2}\left[d(1 - \overline{\alpha}_K) + \overline{\alpha}_K||\mathbf{z}_t^0||^2 - d - d\log(1 - \overline{\alpha}_K)\right] \\
     & = \frac{1}{2}\left[\overline{\alpha}_K||\mathbf{z}_t^0||^2 - d\overline{\alpha}_K - d\log(1 - \overline{\alpha}_K)\right] \\
     & :=                                                                                                                                  \mathcal{L}_{\text{forward}}(\mathbf{z}_t^0).
\end{align*}
For the second term,~\citet{ho2020denoising} uses Bayes rule to derive the
parameterization of the fixed reverse direction kernel as
\begin{align*}
    q(\mathbf{z}_t^{k-1} | \mathbf{z}_t^k, \mathbf{z}_{t}^0)     & = \mathcal{N}(\mathbf{z}_t^{k-1}; \tilde{\mu}_k(\mathbf{z}_t^k, \mathbf{z}_{t}^0), \tilde{\beta}_k\mathbf{I}), \\
    \text{where }\tilde{\mu}_k(\mathbf{z}_t^k, \mathbf{z}_{t}^0) & = \frac{\sqrt{\overline{\alpha}_{k-1}}\beta_k}{1 - \overline{\alpha}_k}\mathbf{z}_t^0 + \frac{\sqrt{\alpha_k}(1-\overline{\alpha}_{k-1})}{1 - \overline{\alpha}_k}\mathbf{z}_t^k \text{ and } \tilde{\beta}_k = \frac{1 - \overline{\alpha}_{k-1}}{1 - \overline{\alpha}_k}\beta_k.
\end{align*}

To turn the training objective of the diffusion model into a regression problem, we must
  parameterize the reverse direction kernel \( p^^t_\psi(\mathbf{z}_t^{k-1} | \mathbf{z}_t^k,
  \mathbf{c}_t) \) in a way that allows us to reparameterize the KL divergence as a noise prediction
  error.
We continue to follow~\citet{ho2020denoising}, and parameterize the reverse direction kernel as \(
  p^^t_\psi(\mathbf{z}_t^{k-1} | \mathbf{z}_t^k, \mathbf{c}_t) = \mathcal{N}(\mathbf{z}_{t}^{k-1} ;
  \mu_\psi(\mathbf{z}_t^k, k, \mathbf{c}_t), \sigma^2\mathbf{I}) \).
Then, we may reparameterize the predicted mean \( \mu_\psi \) as
\begin{equation}
    \label{eq:diff_mean_reparam}
    \mu_\psi(\mathbf{z}_t^k, k, \mathbf{c}_t) = \frac{1}{\sqrt{\alpha_k}}\left(\mathbf{z}_t^k - \frac{\beta_k}{\sqrt{1 - \overline{\alpha}_k}}\epsilon_\psi(\mathbf{z}_t^k, k, \mathbf{c}_t)\right),
\end{equation}
where \( \epsilon_\psi \) is a neural network with parameters \( \psi \) that predicts the noise added at step \( k \).

Under this reparameterization,~\citet{ho2020denoising} shows that the second KL term of \cref{eq:cond_diff_elbo} for a fixed \( k \) is equivalent to
\begin{equation}
    \label{eq:diff_term2}
    \mathbb{E}_{\mathbf{z}_t^0 \sim q, \epsilon \sim \mathcal{N}(\mathbf{0},\mathbf{I})}\left[\frac{\beta_k^2}{2 \sigma^2_k\alpha_k(1- \overline{\alpha}_k)} \left\| \epsilon - \epsilon_\psi\left(\sqrt{\overline{\alpha}_k}\mathbf{z}_t^0 + \sqrt{1 - \overline{\alpha}_k}\epsilon, k, \mathbf{c}_t\right) \right\|^2_2 \right] + C_k,
\end{equation}
where \( C_k \) is solely a function of the noise schedule.

For the last term of \cref{eq:cond_diff_elbo}, observe that
when \( k=1 \), that
\begin{align*}
    \KL{q(\mathbf{z}_t^{k-1}|\mathbf{z}_t^k, \mathbf{z}^0_{t})}{p^^t_\psi(\mathbf{z}_t^{k-1} | \mathbf{z}_t^{k}, \mathbf{c}_t)}
     & =\KL{q(\mathbf{z}_t^{0}|\mathbf{z}_t^1, \mathbf{z}^0_{t})}{p^^t_\psi(\mathbf{z}_t^{0} | \mathbf{z}_t^{1}, \mathbf{c}_t)}.
\end{align*}
The term \( q(\mathbf{z}_t^{0}|\mathbf{z}_t^1, \mathbf{z}^0_{t}) \) is a Dirac delta, i.e., it has zero entropy and
\begin{align*}
    \KL{q(\mathbf{z}_t^{0}|\mathbf{z}_t^1, \mathbf{z}^0_{t})}{p^^t_\psi(\mathbf{z}_t^{0} | \mathbf{z}_t^{1}, \mathbf{c}_t)}
     & = -\mathbb{E}_{q(\mathbf{z}_t^{1}|\mathbf{z}^0_{t})}\left[\log p^^t_\psi(\mathbf{z}_t^{0} | \mathbf{z}_t^{1}, \mathbf{c}_t)\right].
\end{align*}
It follows that the last term in \cref{eq:cond_diff_elbo} is equivalent to the second term of
  \cref{eq:cond_diff_elbo} with \( k=1 \), allowing us to combine these two terms into a single sum.

Rewriting \cref{eq:cond_diff_elbo} with the above results, we have
\begin{equation}
    \label{eq:diff_elbo}
    \begin{aligned}
     & \mathbb{E}_{\mathbf{z}_t \sim q^^t_\phi}[-\log p^^t_\psi(\mathbf{z}_t^0 | \mathbf{c}_t)] \\
     & \leq \mathbb{E}_{\mathbf{z}_t^0 \sim q^^t_\phi}\Bigg[\mathcal{L}_{\text{forward}}(\mathbf{z}_t^0)
        + \underbrace{\sum_{k=1}^K \mathbb{E}_{ \epsilon \sim \mathcal{N}(\mathbf{0},\mathbf{I})}
            \left[w_k \left\|                      \epsilon - \epsilon_\psi\left(\tilde{\mathbf{z}}_t^k, k, \mathbf{c}_t\right) \right\|^2_2 \right]}_{\mathclap{:= \mathcal{L}_{\text{diff}}}} + C\Bigg].
    \end{aligned}
\end{equation}
where
\( w_k = \frac{\beta_k}{2 \alpha_k(1- \overline{\alpha}_k)} \),
\( \tilde{\mathbf{z}}_t^k = \sqrt{\overline{\alpha}_k}\mathbf{z}_t^0 + \sqrt{1 - \overline{\alpha}_k}\epsilon\),
and \(C\) is summation of the \( C_k \) terms.

As \(C\) still is solely a function of the noise schedule, we can drop it from the optimization
  objective.
\subsection{Preconditioning as a Variational Bound}
While training standard diffusion models, the signal-to-noise ratio of the noisy inputs to the
  diffusion model changes drastically across different noise levels.
This results in the magnitude of the regression targets for the noise prediction model to
  fluctuate, and thus the magnitude of the gradients of the diffusion model to fluctuate drastically
  as well.
As the diffusion model is being trained alongside the encoder and decoder, this results in high
  variance gradients for the encoder and decoder, which may lead to instability in training.

To sidestep this issue and ensure stability of the gradients, we employ the preconditioning
  introduced by~\citet{karras2022elucidating}, where the proposed preconditioning re-scales the
  inputs and outputs of the neural network targets.
This rescaling enforces that magnitudes are roughly unit variance across all noise levels, and thus
  the gradients of the diffusion model are more stable across noise levels.

The purpose of this section is to show that the preconditioning of~\citet{karras2022elucidating} is
  a proper bound on the original log-likelihood \(\log p^^t_\psi(\mathbf{z}_t^0 | \mathbf{c}_t)\),
  given proper weighting of the objective.
In turn, we detail the optimization objective for the diffusion transitions for our model,
  specifically \cref{eq:term_diff_chain_body} from the main text.

To align with the framework from~\citet{karras2022elucidating}, we re-scale our forward process,
writing
\begin{align}
    \label{eq:diff_precond}
    \hat{\mathbf{z}}_t^k & = \frac{\tilde{\mathbf{z}}_t^k }{ \sqrt{\overline{\alpha}_k}} \\
                         & =
    \frac{\sqrt{\overline{\alpha}_k}\mathbf{z}_t^0 + \sqrt{1 - \overline{\alpha}_k}                      \epsilon}{ \sqrt{\overline{\alpha}_k}} \\
                         & = \mathbf{z}_t^0 + \sqrt{\frac{1 - \overline{\alpha}_k}{\overline{\alpha}_k}} \epsilon,
\end{align}
where we have the additive noise as
\begin{equation*}
    \sigma_k = \sqrt{\frac{1 - \overline{\alpha}_k}{\overline{\alpha}_k}}.
\end{equation*}

From~\citet{karras2022elucidating}, the preconditioned denoiser is written
as
\begin{equation*}
    D_\psi(\hat{\mathbf{z}}_t^k, \sigma_k, \mathbf{c}_t) = \cskip{\sigma_k}\hat{\mathbf{z}}_t^k +
      \cout{\sigma_k}F_{\psi}\left(\cin{\sigma_k}\hat{\mathbf{z}}_t^k, \cnoise{\sigma_k},
      \mathbf{c}_t\right).
\end{equation*}
where \( F_{\psi} \) is a neural network with parameters \( \psi \),
and the coefficients \( c \) are those used by~\citet{karras2022elucidating} (see Table 1, in the "Ours" column) with
\( \sigma_{\text{data}}=1 \).
For reference, these coefficients are
\begin{align*}
    \cskip{\sigma_k} & = \frac{1}{\sigma_k^2 + 1},        & \cout{\sigma_k} = \frac{\sigma_k}{\sqrt{\sigma_k^2 + 1}},
    \cin{\sigma_k}   & = \frac{1}{\sqrt{\sigma_k^2 + 1}}, & \cnoise{\sigma_k} = \frac{1}{4}\log(\sigma_k).
\end{align*}
To recover our original objective, we use that \( \epsilon = \frac{\hat{\mathbf{z}}_t^k - \mathbf{z}_t^0}{\sigma_k} \)
to recover the predicted noise,
\begin{align*}
    \epsilon_\psi\left(\tilde{\mathbf{z}}_t^k, k, \mathbf{c}_t\right) & =
    \frac{\hat{\mathbf{z}}_t^k - D_\psi(\hat{\mathbf{z}}_t^k, \sigma_k, \mathbf{c}_t)}{\sigma_k}.
\end{align*}

We now may find a new form for the noise prediction error term in \cref{eq:diff_elbo}.
We have
\begin{equation}
    \label{eq:noise_pred_error}
    \begin{aligned}
    w_k \left\| \epsilon - \epsilon_\psi\left(\tilde{\mathbf{z}}_t^k, k, \mathbf{c}_t\right) \right\|^2_2
     & = w_k \left\| \frac{\hat{\mathbf{z}}_t^k - \mathbf{z}_t^0}{\sigma_k} - \frac{\hat{\mathbf{z}}_t^k - D_\psi(\hat{\mathbf{z}}_t^k, \sigma_k, \mathbf{c}_t)}{\sigma_k} \right\|^2_2 \\
     & = \frac{w_k}{\sigma_k^2} \left\| D_\psi(\hat{\mathbf{z}}_t^k, \sigma_k, \mathbf{c}_t) - \mathbf{z}_t^0\right\|^2_2 \\
     & = \frac{w_k}{\sigma_k^2} \left\| \cout{\sigma_k}F_\psi + \cskip{\sigma_k}\hat{\mathbf{z}}_t^k  - \mathbf{z}_t^0 \right\|^2_2 \\
     & = \frac{w_k \cout{\sigma_k}^2}{\sigma_k^2} \left\| F_\psi -  \frac{\mathbf{z}_t^0 -  \cskip{\sigma_k}\hat{\mathbf{z}}_t^k}{\cout{\sigma_k}}  \right\|^2_2 ,
    \end{aligned}
\end{equation}
where to simplify notation going forward we will write the regression targets as
\begin{equation}
    \label{eq:diff_target_precond}
    \mathbf{y}(\mathbf{z}_t^0, k) = \frac{\mathbf{z}_t^0 -  \cskip{\sigma_k}\hat{\mathbf{z}}_t^k}{\cout{\sigma_k}},
\end{equation}
and the coefficient in front of the squared error as
\begin{equation*}
    \tilde{w}_k = \frac{w_k \cout{\sigma_k}^2}{\sigma_k^2}.
\end{equation*}

We substitute our value for \( \cout{\sigma_k}\) and obtain our diffusion loss for a given \( t \) as
\begin{equation}
    \label{eq:diff_elbo_precond}
    \mathcal{L}_{\text{diff}} = \sum_{k=1}^K \mathbb{E}_{\epsilon \sim \mathcal{N}(\mathbf{0},\mathbf{I})}\left[\tilde{w}_k \left\| F_{\psi}\left(\cdot\right) -  \mathbf{y}(\mathbf{z}_t^0, k)  \right\|^2_2 \right].
\end{equation}

Directly implementing the summation in \cref{eq:diff_elbo_precond} is prohibitively expensive since
  it requires a forward pass through the diffusion model \( K \) times per example.
Instead, we employ a Monte-Carlo approximation by sampling \( k \) from an arbitrary density \(
\rho(k) \) on \(\{1,\ldots,K\}\) such that
\begin{align}
    \label{eq:diff_mc_derive}
    \mathcal{L}_{\text{diff}} & = \sum_{k=1}^K \mathbb{E}_{                                                                              \epsilon \sim \mathcal{N}(\mathbf{0},\mathbf{I})}\left[\tilde{w}_k \left\| F_{\psi}\left(\cdot\right) -  \mathbf{y}(\mathbf{z}_t^0, k)  \right\|^2_2 \right] \\
                              & = \mathbb{E}_{k \sim \rho(k),                                                                            \epsilon \sim \mathcal{N}(\mathbf{0},\mathbf{I})}
    \left[\frac{\tilde{w}_k}{\rho(k)} \left\| F_{\psi}\left(\cdot\right) -  \mathbf{y}(\mathbf{z}_t^0, k)  \right\|^2_2 \right].
\end{align}
The choice of \( \rho(k) \) affects the variance of the Monte-Carlo estimate.
The coefficient \( \tilde{w}_k \) is typically viewed as a weight, where in our
case we have
\begin{align}
    \label{eq:is-weight}
    \tilde{w}_k & = \frac{w_k \cout{\sigma_k}^2}{\sigma_k^2} \\
                & = w_k \cdot \overline{\alpha}_k \\
                & = \frac{\beta_k\overline{\alpha}_k}{2\alpha_k(1-\overline{\alpha}_k)}.
\end{align}


Many works drop the weights and perform uniform sampling, implicitly reweighting the objective.
Dropping this term imbalances the terms in the ELBO.
\Citet{vahdat2021score} also trains a diffusion model end-to-end in a VAE framework, and
highlight this issue as well; during training of the encoder,~\citet{vahdat2021score} choose to retain the weights
so that the encoder is encouraged to match the true posterior of the diffusion model.
We provide an alternative perspective on this issue by noting that the weights \( \tilde{w}_k \)
  are very large for small \( k \) (small noise levels), and very small for large \( k \) (large
  noise levels).
Intuitively, this means that maximizing the ELBO penalizes errors in the noise prediction at small
  noise levels, where the signal-to-noise ratio is high.
On the other hand, errors in noise prediction at large noise levels are not penalized as much,
  where the diffusion model is focused on denoising the lower-frequency structure of the sample.
In the context of our full model, this property implies that the strongest regularization of the
  encoder comes from the small noise levels, penalizing the encoder for producing latent samples that
  cannot reveal the fine-grained structure of the data.
We retain the weights in our implementation, as we do not separately train the diffusion model from
  the encoder.
Lastly, we set \(\rho(k) = \frac{1}{K}\) for uniform sampling, to ensure that the Monte-Carlo
  estimate is unbiased without requiring any additional weighting of the loss.

\section{Experiment Details}\label{app:experiments}
To ensure a fair comparison for our synthetic experiments, we fix architectural hyperparameters
  across the DKF and DDSSM models, and tune hyperparameters for both models using
  Optuna~\citep{akiba2019optuna}.
The only difference in architecture is the implementation for the transition model, where the DKF
  uses a Gaussian transition module and the DDSSM uses a conditional U-Net parameterized diffusion
  model.
We provide identical search spaces to the tuning algorithm, using the one-step JSD as the
  evaluation metric on a validation set of examples.

\subsection{Synthetic Experiment Architectural Parameters} \label{app:architectural_params}

We construct our models using the modular elements described in \cref{apd:model}.
Specifically, we define the \textit{time mixer} (operating over the sequence length axis) and the
  \textit{feature mixer} (operating across the latent or observation dimensions) for each individual
  module.
Across all synthetic experiments, we fix the latent dimension to $M=1$, the sequential latent lag
  to $j=1$, the context channels to 16, and the dense hidden dimension to 16.
Embeddings for continuous time steps are disabled, while discrete mask embeddings have dimension 8.

For both the baseline DKF and our DDSSM, the shared contextual architectures are summarized in
  \cref{tab:base_architectures}.

\begin{table}[ht]
    \centering
    \begin{tabular}{l c c c l l}
        \toprule
        \textbf{Module}           & \textbf{Layers} & \textbf{Channels} & \textbf{Hidden Dim} & \textbf{Time Mixer} & \textbf{Feature Mixer} \\
        \midrule
        Encoder Summary           & 3               & --                & 16                  & GRU                 & --                     \\
        Encoder Context           & 2               & 16                & 16                  & Conv ($k=2$)        & Conv ($k=2$)           \\
        Decoder Context           & 2               & 16                & 16                  & Conv ($k=2$)        & Conv ($k=2$)           \\
        Initialization Context    & 2               & 16                & 16                  & Identity            & Conv ($k=2$)           \\
        Gaussian Transition (DKF) & 3               & 16                & 16                  & Identity            & Conv ($k=2$)           \\
        \bottomrule
    \end{tabular}
    \caption{Shared architectural parameters across models, including the baseline DKF transition model.
        The Gaussian transition relies on an identity time mixer since it only projects summary statistics
          for a single target step, whereas the encoder and decoder leverage temporal convolutions to process
          multi-step histories and future summaries.
    } \label{tab:base_architectures}
\end{table}

\textbf{Diffusion Transition Specifications.}
For the DDSSM, we substitute the Gaussian transition module with a conditional U-Net parameterized
  diffusion model.
Unlike the Gaussian baseline, the diffusion denoiser utilizes a convolutional time mixer rather
  than an identity map.
This is because at each diffusion step $k$, the transition model observes both the clean history
  states $\mathbf{z}_{t-j:t-1}$ and the noisy target state $\mathbf{z}_t^k$, requiring explicit
  temporal mixing.
We summarize the network sizing and diffusion schedule parameters in Table
  \ref{tab:diff_architectures}.

\begin{table}[ht]
    \centering
    \begin{tabular}{l l}
        \toprule
        \textbf{Parameter}              & \textbf{Value}        \\
        \midrule
        U-Net Layers                    & 3                     \\
        U-Net Channels (Blocks)         & 16                    \\
        Time Mixer                      & Conv ($k=2$)          \\
        Feature Mixer                   & Conv ($k=2$)          \\
        Diffusion Step Embedding Space  & 64 (projected to 128) \\
        Feature Embedding Dimension     & 8                     \\
        \midrule
        \textbf{Diffusion Schedule}     &                       \\
        Number of Diffusion Steps ($K$) & 128                   \\
        Noise Schedule                  & Linear                \\
        \bottomrule
    \end{tabular}
    \caption{Architectural and schedule parameters specific to the diffusion-based transition model in DDSSM.} \label{tab:diff_architectures}
\end{table}

\subsubsection{Hyperparameters and Tuning} \label{app:tuning}
We use Optuna~\citep{akiba2019optuna} for hyperparameter tuning.
Each trial consists of training a model for 1500 steps and evaluating the CRPS-sum metric on unseen
  validation data.
We use 1024 training examples, 1024 validation examples, and 1024 test examples for the synthetic
  experiment, each generated by simulating the same underlying process with a fixed random seed.

The parameters we tune, their tuning ranges, and the optimal parameters found for the DKF and DDSSM
  models are summarized in \cref{tab:tuning_summary,tab:fixed_hyperparameters}.

\begin{table}[ht]
    \centering
    \begin{tabular}{|l|l|l|l|}
        \hline
        \textbf{Parameter}       & \textbf{Tuning Range}               & \textbf{DKF} & \textbf{DDSSM} \\
        \hline
        lambda\_schedule         & linear, cosine                      & cosine       & cosine         \\
        lambda\_warmup\_steps    & 200 -- 1200                         & 867          & 889            \\
        lambda\_end              & 0.3 -- 2                            & 1.245        & 1.243          \\
        enc\_lr                  & $5\times10^{-5}$ -- $10^{-2}$ (log) & 0.00887      & 0.000864       \\
        dec\_lr                  & $5\times10^{-5}$ -- $10^{-2}$ (log) & 0.00777      & 0.000300       \\
        trans\_lr                & $5\times10^{-5}$ -- $10^{-2}$ (log) & 0.000099     & 0.00941        \\
        zinit\_lr                & $10^{-4}$ -- $10^{-3}$ (log)        & 0.005        & 0.000801       \\
        S                        & 1 -- 4                              & 2            & 2              \\
        batch\_size              & 64, 128, 256, 512                   & 512          & 128            \\
        transition.schedule.S\_k & 1 -- 8 (Diffusion only)             & 1            & 5              \\
        \hline
    \end{tabular}
    \caption{Tuning ranges and optimal parameters found for DKF and DDSSM models.}
    \label{tab:tuning_summary}
\end{table}

\begin{table}[ht]
    \centering
    \begin{tabular}{|l|l|}
        \hline
        \textbf{Hyperparameter}  & \textbf{Fixed Value}        \\
        \hline
        Number of training steps & 1500 (tuning), 3000 (final) \\
        Number of Optuna trials  & 50                          \\
        Number of startup trials & 10                          \\
        Encoder/decoder channels & 16                          \\
        Encoder/decoder layers   & 2                           \\
        Weight decay             & 0.01                        \\
        \hline
    \end{tabular}
    \caption{Hyperparameters fixed across all models during tuning.}
    \label{tab:fixed_hyperparameters}
\end{table}

\section{Variational Hierarchical Prior} \label{app:vhp}
\TODO{add covariates to this section, if deemed appropriate}
\Citet{klushyn2021latent, klushyn2019learning}, consider the problem of defining the prior distribution over the initial
latent state \(p(\mathbf{z}_1)\) in the \(j=1\) case.
Typically, the prior is chosen by DSSMs to be a standard Gaussian~\citet{krishnan2015deep,
      krishnan2017structured, karl2016deep}.
The prior over \(\mathbf{z}_1\) however regularizes the rest of the latent trajectory, as both the
  transition model and the encoder posterior are conditioned on \(\mathbf{z}_1\).
This is experimentally validated by~\citet{klushyn2019learning, klushyn2021latent}, where it is
  demonstrated that a more flexible prior over the initial latent state can lead to better inference
  about the latent trajectory.

As our diffusion transition model can be highly expressive, we wish to avoid inhibiting the
  expressiveness of the transition model by imposing a simple prior over the initial latent states.
We therefore follow~\citet{klushyn2021latent} and use an expressive initial distribution via a
  variational hierarchical prior (VHP).

In our case, we wish to handle higher Markov orders, so our initial state distribution is over the
  first \(j\) latents, \( p(\mathbf{z}_{1 : j}) \).
For the \( j=1 \) case,~\cite{klushyn2021latent} uses a two-level hierarchical prior, where the initial latent state
\( \mathbf{z}_1 \) is modeled as
\begin{equation*}
    p(\mathbf{z}_1) = \int p_\eta(\mathbf{z}_1 | \mathbf{z}_0) p(\mathbf{z}_0) d\mathbf{z}_0,
\end{equation*}
where \( p(\mathbf{z}_0) \) is a standard Gaussian.
This choice mimics the fact that the optimal empirical bayes prior for a VAE is \( p*(\mathbf{z}_1)
  = \mathbb{E}_{p(\mathbf{x}_{1 : T})}[q_\phi(\mathbf{z}_1 | \mathbf{x}_{1: T})]\).

We generalize this to the \( j \)-order Markovian setting by introducing \( j \) auxiliary
  variables \( \mathbf{z}_{-j+1 : 0} \).
The conditional distribution \( p_\eta(\mathbf{z}_{1 : j} | \mathbf{z}_{-j+1 : 0}) \) is
given by the chain rule,
\begin{align*}
    p_\eta(\mathbf{z}_{1 : j} | \mathbf{z}_{-j+1 : 0}) & =
    \prod_{t=1}^j p_\eta(\mathbf{z}_t | \mathbf{z}_{t-1}, \ldots, \mathbf{z}_{-j + 1}) \\
                                                       & = \prod_{t=1}^j p_\eta(\mathbf{z}_t | \mathbf{z}_{t-j : t-1}),
\end{align*}
where we have used the \( j \)-order Markovian property in the second line.
Marginalizing over the auxiliary variables, we have
\begin{align*}
    p_\eta(\mathbf{z}_{1 : j}) & = \int p_\eta(\mathbf{z}_{1 : j} | \mathbf{z}_{-j+1 : 0}) p(\mathbf{z}_{-j+1 : 0}) d\mathbf{z}_{-j+1 : 0} \\
                               & = \int \prod_{t=1}^j p_\eta(\mathbf{z}_t | \mathbf{z}_{t-j : t-1}) p(\mathbf{z}_{-j+1 : 0}) d\mathbf{z}_{-j+1 : 0}.
\end{align*}
Continuing, we introduce the variational distribution \( q_\Phi(\mathbf{z}_{-j+1 : 0} |
  \mathbf{z}_{1 : j}) \), parameterized by \( \Phi \), to approximate the posterior over the
  auxiliary variables.
We find the bound on the optimal expected marginal log-likelihood of \( p(\mathbf{z}_{1 : j}) \) as
\begin{align*}
     & \mathbb{E}_{p*(\mathbf{z}_{1 : j})}[\log p_\eta(\mathbf{z}_{1 : j})] \\
     & \geq \mathbb{E}_{p(\mathbf{x}_{1 : T})}\mathbb{E}_{q_\phi(\mathbf{z}_{1 : T} | \mathbf{x}_{1 : T}, \mathbf{u}_{1 : T})}
    \left[
        \mathbb{E}_{q_\Phi(\mathbf{z}_{-j+1 : 0} | \mathbf{z}_{1 : j})}
        \left[
            \log
            \frac{
                p_\eta(\mathbf{z}_{1 : j} | \mathbf{z}_{-j+1 : 0}) p(\mathbf{z}_{-j+1 : 0})
            }{
                q_\Phi(\mathbf{z}_{-j+1 : 0} | \mathbf{z}_{1 : j})
            }
            \right]
    \right] \\
     & =
    \mathbb{E}_{p(\mathbf{x}_{1 : T})}\mathbb{E}_{q_\phi(\mathbf{z}_{1 : T} | \mathbf{x}_{1 : T}, \mathbf{u}_{1 : T})}
    \left[
        \mathbb{E}_{q_\Phi(\mathbf{z}_{-j+1 : 0} | \mathbf{z}_{1 : j})}
        \left[
            \log
            p_\eta(\mathbf{z}_{1 : j} | \mathbf{z}_{-j+1 : 0})
            + \log p(\mathbf{z}_{-j+1 : 0})
            - \log q_\Phi(\mathbf{z}_{-j+1 : 0} | \mathbf{z}_{1 : j})
            \right]
    \right] \\
     & =
    \mathbb{E}_{p(\mathbf{x}_{1 : T})}\mathbb{E}_{q_\phi(\mathbf{z}_{1 : T} | \mathbf{x}_{1 : T}, \mathbf{u}_{1 : T})}
    \Bigg[
        \mathbb{E}_{q_\Phi(\mathbf{z}_{-j+1 : 0} | \mathbf{z}_{1 : j})}
        \left[
            \sum_{t=1}^j \log p_\eta(\mathbf{z}_t | \mathbf{z}_{t-j : t-1})
    \right] \nonumber \\
     & \qquad\qquad\qquad\qquad\qquad\qquad- \KL{q_\Phi(\mathbf{z}_{-j+1 : 0} | \mathbf{z}_{1 : j})}{p(\mathbf{z}_{-j+1 : 0})}
        \Bigg],
\end{align*}
where in the last line we have expanded the conditional prior and recognized the KL divergence.
In line with~\cite{klushyn2021latent}, we set \( p(\mathbf{z}_{-j+1 : 0}) = \mathcal{N}(0,
  \mathbf{I}) \), and parameterize \( q_\Phi(\mathbf{z}_{-j+1 : 0} | \mathbf{z}_{1 : j}) \) as a
  diagonal Gaussian with mean and variance given by neural networks that take \( \mathbf{z}_{1 : j}
  \) as input.

Note that in the variational posterior, \( q_\phi(\mathbf{z}_{1 : j} | \mathbf{x}_{1 : T},
  \mathbf{u}_{1 : T}) \) depends on all observations and inputs, while in the prior \(
  p_\eta(\mathbf{z}_{1 : j} | \mathbf{z}_{-j+1 : 0}) \) only depends on the auxiliary variables.
Therefore it is unnecessary to provide a hierarchical structure to the initial latent states for
  the variational posterior, as the variational posterior can already leverage the full context of
  the observations and inputs to infer the initial latent states.
As we implement a network to parameterize \( q^^t_\phi(\mathbf{z}_{t} | \mathbf{z}_{t-j : t-1},
  \mathbf{x}_{t:T}, \mathbf{u}_{t-j : t}) \), we can simply use the same network to parameterize \(
  q_\phi(\mathbf{z}_{1 : j} | \mathbf{x}_{1 : T}, \mathbf{u}_{1 : T}) \) by providing zero inputs
  for \( \mathbf{z}_{t-j : t-1} \) when \( t \leq j \).
These drawn latent states are then used for drawing from \( q_\Phi(\mathbf{z}_{-j+1 : 0} |
  \mathbf{z}_{1 : j}) \) to compute the hierarchical prior.
\TODO{describe more precisely how this is implemented}

Rewriting \( \mathcal{L}_{\text{init}}(\phi,\eta; \mathbf{x}_{1 : T}, \mathbf{u}_{1 : T}) \) from
\cref{eq:term_init_loss} using the hierarchical prior, we have
\begin{align}
    \mathcal{L}_{\text{init}}(\phi,\eta,\Phi; \mathbf{x}_{1 : T}, \mathbf{u}_{1 : T})
     & = \mathbb{E}_{q_\phi(\mathbf{z}_{1 : T} | \mathbf{x}_{1: T}, \mathbf{u}_{1 : T})}
    \Big[
    \log q_\phi\left(\mathbf{z}_{1 : j} | \mathbf{x}_{1 : T}, \mathbf{u}_{1 : t}\right) \nonumber \\
     & \qquad- p_\eta(\mathbf{z}_{1 : j} | \mathbf{z}_{-j+1 : 0}) p(\mathbf{z}_{-j+1 : 0}) d\mathbf{z}_{-j+1 : 0}
    \Big] \nonumber \\
     & \leq \mathbb{E}_{q_\phi(\mathbf{z}_{1 : T} | \mathbf{x}_{1: T}, \mathbf{u}_{1 : T})}
    \Big[
    \log q_\phi\left(\mathbf{z}_{1 : j} | \mathbf{x}_{1 : T}, \mathbf{u}_{1 : t}\right) \nonumber \\
     & \qquad- \mathbb{E}_{q_\Phi(\mathbf{z}_{-j+1 : 0} | \mathbf{z}_{1 : j})}
        \left[
            \log p_\eta(\mathbf{z}_{1 : j} | \mathbf{z}_{-j+1 : 0})
            + \log p(\mathbf{z}_{-j+1 : 0})
            - \log q_\Phi(\mathbf{z}_{-j+1 : 0} | \mathbf{z}_{1 : j})
            \right]
    \Big] \nonumber \\
     & =\underbrace{ \mathbb{E}_{q_\phi(\mathbf{z}_{1 : T} | \mathbf{x}_{1: T}, \mathbf{u}_{1 : T})}
        \Big[
    \log q_\phi\left(\mathbf{z}_{1 : j} | \mathbf{x}_{1 : T}, \mathbf{u}_{1 : t}\right)\Big]}_{\text{negative entropy}} \nonumber \\
     & \qquad +  \mathbb{E}_{q_\phi(\mathbf{z}_{1 : T} | \mathbf{x}_{1: T}, \mathbf{u}_{1 : T})}
    \Big[\mathbb{E}_{q_\Phi(\mathbf{z}_{-j+1 : 0} | \mathbf{z}_{1 : j})}
        \Big[
    -\log p_\eta(\mathbf{z}_{1 : j} | \mathbf{z}_{-j+1 : 0}) \nonumber \\
     & \qquad \qquad
            - \log p(\mathbf{z}_{-j+1 : 0})
            + \log q_\Phi(\mathbf{z}_{-j+1 : 0} | \mathbf{z}_{1 : j})
            \Big]
        \Big]
    \Big] \nonumber \\
     & =\underbrace{ \mathbb{E}_{q_\phi(\mathbf{z}_{1 : T} | \mathbf{x}_{1: T}, \mathbf{u}_{1 : T})}
        \Big[
    \log q_\phi\left(\mathbf{z}_{1 : j} | \mathbf{x}_{1 : T}, \mathbf{u}_{1 : t}\right)\Big]}_{\mathcal{L}_{\text{init}}^{\text{entropy}}(\phi; \mathbf{x}_{1 : T}, \mathbf{u}_{1 : T})} \nonumber \\
     & \qquad +  \mathbb{E}_{q_\phi(\mathbf{z}_{1 : T} | \mathbf{x}_{1: T}, \mathbf{u}_{1 : T})}
    \Big[\mathbb{E}_{q_\Phi(\mathbf{z}_{-j+1 : 0} | \mathbf{z}_{1 : j})}
        \Big[
    - \sum_{t=1}^j \log p_\eta(\mathbf{z}_t | \mathbf{z}_{t-j : t-1}) \nonumber \\
     & \qquad \qquad
            + \KL{q_\Phi(\mathbf{z}_{-j+1 : 0} | \mathbf{z}_{1 : j})}{p(\mathbf{z}_{-j+1 : 0})}
            \Big]
        \Big]
    \Big] \nonumber \\
     & := \mathcal{L}_{\text{init}}^{\text{entropy}}(\phi; \mathbf{x}_{1 : T}, \mathbf{u}_{1 : T})
    + \mathcal{L}_{\text{VHP}}(\phi,\eta,\Phi; \mathbf{x}_{1 : T}, \mathbf{u}_{1 : T}) \label{eq:vhp_loss},
\end{align}
and notice that we have split the final line into two terms, the negative entropy of the variational
posterior over the initial states, and the variational hierarchical prior (VHP) loss,
where we have defined
\begin{align*}
    \mathcal{L}_{\text{VHP}}(\phi,\eta,\Phi; \mathbf{x}_{1 : T}, \mathbf{u}_{1 : T})
     & =  \mathbb{E}_{q_\phi(\mathbf{z}_{1 : T} | \mathbf{x}_{1: T}, \mathbf{u}_{1 : T})}
    \Big[\mathbb{E}_{q_\Phi(\mathbf{z}_{-j+1 : 0} | \mathbf{z}_{1 : j})}
        \Big[
    - \sum_{t=1}^j \log p_\eta(\mathbf{z}_t | \mathbf{z}_{t-j : t-1}) \\
     & \qquad \qquad
            + \KL{q_\Phi(\mathbf{z}_{-j+1 : 0} | \mathbf{z}_{1 : j})}{p(\mathbf{z}_{-j+1 : 0})}
            \Big]
        \Big] .
\end{align*}

Performing a substitution of the resulting inequality in \cref{eq:vhp_loss} into the ELBO in
\cref{eq:term_init_loss}, we obtain the ELBO which incorporates the variational hierarchical prior:
\begin{align}
     & -\log p\left(\mathbf{x}_{1:T} | \mathbf{u}_{1 : T}\right) \nonumber \\
     & \leq  \mathcal{L}_{\text{init}}^{\text{entropy}}(\phi; \mathbf{x}_{1 : T}, \mathbf{u}_{1 : T})
    + \mathcal{L}_{\text{VHP}}(\phi,\eta,\Phi; \mathbf{x}_{1 : T}, \mathbf{u}_{1 : T}) \nonumber \\
     & +  \mathcal{L}_{\text{recon}}(\phi,\theta; \mathbf{x}_{1 : T}, \mathbf{u}_{1 : T})
    +  \mathcal{L}_{\widehat{\text{diff}}}(\phi,\psi; \mathbf{x}_{1 : T}, \mathbf{u}_{1 : T})
    +  \mathcal{L}_{\text{entropy}}(\phi; \mathbf{x}_{1 : T}, \mathbf{u}_{1 : T}). \label{eq:final_elbo_vhp}
\end{align}
which we minimize with respect to the parameters \( \phi, \eta, \theta, \psi, \Phi \).

\section{Model Architecture Details}\label{apd:model}

The method of~\citet{tashiro2021csdi}, titled ``CSDI: Conditional Score-based Diffusion Models for
  Probabilistic Time Series Imputation", uses a well-established architecture for modeling
  time-series data with diffusion models~\citet{tashiro2021csdi}\AFTER{cite diffwave}, and we adapt
  it to our setting of autoregressive generation in the latent space.
We choose to extend the architecture of~\citet*{tashiro2021csdi}, for several reasons.
First, it is a well-established architecture for modeling time-series data with diffusion models,
  Second, its success in modeling directly in observation space suggests that it is sufficiently
  powerful to model the transition distribution in latent space, and it is simple enough to allow us
  to modularize the architecture depending on the specific needs of the encoder, decoder, and
  transition modules.
Specifically, we extend the U-net architecture from CSDI to not only our diffusion model, but also
  to the encoder, decoder, and initialization modules, which all operate on latent histories.

\subsection{U-net Architecture} \label{app:unet_architecture}
Because we operate in the latent space and generate autoregressively, we utilize a static mask
  indicating which time steps functionally serve as the conditioning history versus the generated
  output, rather than a dynamic missing-data mask.
Our mask is thus a binary mask \(\mathbf{m}_{\mathrm{hist}}\) of shape \( (j+1) \), which equals
  one for history steps and zero for the target step, since we are always generating a single step
  and conditioning on \(j\) history steps.
The input to the denoiser is a tensor \(\mathbf{Z} \in \mathbb{R}^{M \times (j+1)}\), where \(M\)
  is the latent dimension.
The target noise is always located at the last time step, while the clean history is located in the
  preceding \(j\) time steps.

\TODO{gru cite}
\paragraph{Input Projection}
Following~\citet{tashiro2021csdi}, we separate the input into two components via the mask: a clean
  history \(\mathbf{Z}_{\mathrm{hist}} = \mathbf{Z} \odot \mathbf{m}_{\mathrm{hist}}\) and a noisy
  target \(\mathbf{Z}_{\mathrm{noisy}} = \mathbf{Z} \odot (1 - \mathbf{m}_{\mathrm{hist}})\).
These are concatenated along a new channel dimension to form a tensor in \(\mathbb{R}^{2 \times M
      \times (j+1)}\).
A \(1\times 1\) convolution and a ReLU activation project this input to a hidden channel dimension
  \(C\), yielding an initial representation \(\mathbf{H}_0 \in \mathbb{R}^{C \times M \times
      (j+1)}\).

\paragraph{Extra Conditioning Information}
Conditioning is injected via a continuous diffusion timestep embedding and a side information
  tensor.
The scalar diffusion noise level is embedded via sinusoidal positional encodings and a two-layer
  multi-layer perceptron into \(\mathbf{e}_{\mathrm{diff}} \in \mathbb{R}^{E_{\mathrm{diff}}}\).
The side information tensor \(\mathbf{S} \in \mathbb{R}^{E_{\mathrm{side}} \times M \times (j+1)}\)
  concatenates dynamic covariates \(\mathbf{u}_{t-j: t}\), absolute time embeddings, and the mask
  \(\mathbf{m}_{\mathrm{hist}}\).
Since this diffusion operates on compressed latents rather than the original explicitly meaningful
  observation coordinates, in contrast to~\citet{tashiro2021csdi}, we do not utilize static feature
  embeddings here.

\paragraph{Residual Blocks}
The core of our network consists of \(L\) stacked residual blocks, each of which takes in a
  representation of shape \((C, M, j+1)\) and produces an output of the same shape as well as a skip
  connection.
The residual block itself is mostly unchanged from CSDI, except we abstract the time-mixing and
  feature-mixing operations.
This allows us to replace computationally heavy attention operations with architectures like a 1D
  convolution stack or a Gated Recurrent Unit (GRU) when the sequence length \(j\) is small.
We refer the reader to~\citet{tashiro2021csdi} for more details on the architecture of the residual
  block.

\paragraph{Output Projection}
We continue to follow CSDI for handling the residual block inputs and outputs, as well as the skip
  connections.
However, we deviate from CSDI in the output projection step.
Because we generate autoregressively, we only need to produce an output for the target time step,
  rather than a full sequence of length \(j+1\).
So, we slice the final output of the residual block stack to only keep the target time step,
  yielding a tensor of shape \((M)\), containing the predicted noise for the target time step.

\subsection{Context Producer} \label{app:context_producer_architecture}
The encoder, decoder and initialization module, similar to the denoiser, operate on latent
  histories.
Therefore for the encoder, decoder, and initialization module, we adapt the U-net architecture
  above as a general input-output architecture, where the inputs and outputs are tensors of shape
  \((h,L)\) for some hidden dimension \(h\) and sequence length \(L\).

The encoder, decoder, and initialization modules process sequential latent and observable data
  histories.
To support this repetition, we abstract the U-net framework into a generalized feature-extraction
  module, which we call the \textit{Context Producer}.
The Context Producer accepts inputs of shape \((H_{\mathrm{seq}}, L)\), where \(H_{\mathrm{seq}}\)
  is the spatial or feature dimension and \(L\) is the temporal sequence length.
The output is a vector representing the entire trajectory.

\paragraph{Input Projection}
Unlike the denoiser which separates clean and noisy data, the Context Producer takes a single
  sequence tensor \(\mathbf{H}_{\mathrm{in}} \in \mathbb{R}^{H_{\mathrm{seq}} \times L}\).
We treat this input sequence as having a single channel, flattening it temporarily to shape \((1,
  H_{\mathrm{seq}} \times L)\), and apply a \(1 \times 1\) convolution followed by a ReLU activation.
This projects the input to a hidden channel dimension \(C\), yielding an initial representation
  \(\mathbf{H}_0 \in \mathbb{R}^{C \times H_{\mathrm{seq}} \times L}\).

\paragraph{Extra Conditioning Information}
Because the Context Producer does not reverse a diffusion process, we completely omit the diffusion
  step embedding.
As with the denoiser, we inject conditioning information via a side information tensor, but we
  adapt the construction of this tensor to a side-information tensor \(\mathbf{S} \in
  \mathbb{R}^{E_{\mathrm{side}} \times H_{\mathrm{seq}} \times L}\), where \(E_{\mathrm{side}}\) is
  the side information embedding dimension.
We carefully construct this tensor to support both temporal and static covariates, as well as
  absolute time embeddings and the history/target mask.
To construct the tensor we handle temporal and static information separately, and then concatenate
  them along the feature dimension.
For each type of information we have:
\begin{itemize}
    \item \textbf{Temporal Information.}
          Dynamic covariates, time embeddings, and valid/padding masks vary over time \(L\) but are uniform
            across the feature dimension \(H_{\mathrm{seq}}\).
          These are computed as a tensor of shape \((E_{\mathrm{time}} + E_{\mathrm{mask}}, L)\) and expanded
            along the feature dimension to \((E_{\mathrm{time}} + E_{\mathrm{mask}}, H_{\mathrm{seq}}, L)\).
    \item \textbf{Static Information.}
          Static categorical or contextual embeddings vary across the features but are constant across time.
          These are mapped to a tensor \((E_{\mathrm{static}}, H_{\mathrm{seq}})\) and expanded along the
            time dimension to \((E_{\mathrm{static}}, H_{\mathrm{seq}}, L)\).
\end{itemize}
We concatenate the temporal and static information along the feature dimension to yield the final
  side information tensor \(\mathbf{S} \in \mathbb{R}^{E_{\mathrm{side}} \times H_{\mathrm{seq}}
  \times L}\) where \(E_{\mathrm{side}} = E_{\mathrm{time}} + E_{\mathrm{mask}} +
  E_{\mathrm{static}}\).

\paragraph{Residual Blocks}
The residual blocks of the context producer are unchanged from \cref{app:unet_architecture}, except
  the diffusion embedding is omitted.
The utilization of the side information tensor is unchanged.

\paragraph{Output Projection}
The rest of the architecture is unchanged from our denoiser adaptation, except we do not project
  the \((C,h,L)\) output back to \((1,h,L)\) before slicing as we wish to produce a dense summary of
  the input rather than a noise prediction for a single time step.
To concisely collapse the temporal dimension, we reshape the tensor equivalently to \((C \times
  H_{\mathrm{seq}}, L)\) and apply a 1D convolution over the time dimension.
Standard 1D convolutions would mix representations across the feature dimension, so we strictly use
  a grouped 1D convolution with groups equal to \(C \times H_{\mathrm{seq}}\) and a kernel size equal
  to the sequence length \(L\).
This operation pools the sequence, outputting a flattened context vector in \(\mathbb{R}^{C \times
  H_{\mathrm{seq}}}\).

\subsection{Decoder Architecture} \label{app:decoder_architecture}
The decoder architecture parameterizes the emissions density \(p^^t_{\theta}(\mathbf{x}_t |
  \mathbf{z}_{t-j+1:t}, \mathbf{u}_{t-j + 1 : t })\) as a diagonal Gaussian distribution over the
  observation \(\mathbf{x}_t \in \mathbb{R}^D\).
It takes the sequential latent history and extracts a summary using a Context Producer to predict
  the observation parameters.

\paragraph{Input Preparation}
The inputs include the length-\(j\) latent history \(\mathbf{z}_{t : t-j + 1}\) and covariates
  \(\mathbf{u}_{t-j + 1: t}\).
For early time steps where \(t < j\), the available history is left-padded with zero vectors to
  strongly enforce a fixed sequence length \(j\).
We include a binary mask to indicate which time steps are padded versus valid, and add this to the
  set of conditioning information.

The resulting sequence of shape \((M, j)\) is processed step-wise by a linear layer mapping the
  latent dimension \(M\) to the Context Producer's spatial dimension \(H_{\mathrm{seq}}\).
This outputs the primary input sequence \(\mathbf{H}_{\mathrm{in}} \in \mathbb{R}^{H_{\mathrm{seq}}
  \times j}\) for the Context Producer.

\paragraph{Conditioning Construction}
The decoder's Context Producer requires side information aligned to its \(j\)-step temporal axis:
\begin{itemize}
    \item \textbf{Temporal Information:}
          We compute absolute time embeddings for the \(j\) history steps.
          These are concatenated with the step-aligned dynamic covariates \(\mathbf{u}_{t-j+1:t}\).
          As previously mentioned, we embed a binary padding mask---which flags valid history vectors versus
            left-padded zero vectors---into a continuous embedding space.
    \item \textbf{Static Information:}
          If static observation-level covariates are present, they are linearly projected from the data
            dimension \(D\) to the spatial hidden dimension \(H_{\mathrm{seq}}\).
          The purpose of this projection is to ease the decoder's task of matching its generated data to
            observed data, since the semantically meaningful static covariates are in the data space rather
            than the latent space.
\end{itemize}
These elements are integrated as the side information tensor \(\mathbf{S}\) utilizing the
  mechanisms defined in \cref{app:context_producer_architecture}.

\paragraph{Gaussian Output}
The Context Producer condenses the sequence into a final structured summary vector of shape \( (C
  \times H_{\mathrm{seq}}) \).
Because the Context Producer utilizes several non-linear residual blocks, we preserve a residual
  path for available static covariates by mapping the flattened static covariates through an
  auxiliary linear layer and adding the result directly to the summary vector.
This resulting context vector is passed into the Gaussian output head, which comprises two
  multi-layer perceptrons producing the distribution parameters: the mean \(\mu_\theta(\mathbf{r}_t)
  \in \mathbb{R}^D\) and the log-variance \(\log \Sigma_\theta(\mathbf{r}_t) \in \mathbb{R}^D\) for
  the components of \(\mathbf{x}_t\).

\subsection{Encoder Architecture} \label{app:encoder_architecture}
The encoder approximates the smoothed variational density \(q^^t_\phi(\mathbf{z}_t |
  \mathbf{z}_{t-j:t-1}, \mathbf{x}_{1:T}, \mathbf{u}_{t-j : t})\).
As mentioned in \cref{app:post_fact}, we simplify the conditioning set by summarizing all future
  information into a sequence of continuous vectors \(\mathbf{h}_{1:T}\), i.e. our variational family
  assumes \(q^^t_\phi(\mathbf{z}_t | \mathbf{z}_{t-j : t-1}, \mathbf{u}_{t-j : t}, \mathbf{x}_{t :
      T}) \approx q^^t_\phi(\mathbf{z}_t | \mathbf{z}_{t-j : t-1}, \mathbf{u}_{t-j : t}, \mathbf{h}_t)\).
Therefore, there are two stages to our encoder's operation: (i) a full-sequence \textit{Future
      Summary} module that processes the entire future trajectory to produce a sequence of future
  summaries \(\mathbf{h}_{1:T}\), and (ii) an autoregressive \textit{Context Producer} that takes in
  the current latent history and the current future summary to produce the parameters of the
  variational distribution over \(z_t\).
We provide the algorithmic details of sampling a full trajectory from the encoder in
  \cref{alg:sample_latent_paths}.

\paragraph{Future Summary Module}
The Future Summary module computes \(\mathbf{h}_{1:T} = F_{\phi}(\mathbf{x}_{1:T},
  \mathbf{m}_{\mathrm{obs}})\).
At each time step across the length-\(T\) sequence, we concatenate the observations
  \(\mathbf{x}_t\), absolute time embeddings, observation missingness masks
  \(\mathbf{m}_{\mathrm{obs}, t}\), and the flattened static covariates.
This feature vector is linearly projected to a hidden dimension \(C_{\mathrm{summary}}\).
In line with~\citet{krishnan2017structured}, to ensure that \(\mathbf{h}_t\) only absorbs
  information from the \textit{future and present} (time\(t \dots T\)), the entire projected sequence
  is reversed along the time dimension and passed through a causal sequence model (such as a Gated
  Recurrent Unit or a causally-masked Transformer).
The output sequence is reversed again, restoring the original chronological order while
  guaranteeing that \(\mathbf{h}_t \in \mathbb{R}^{C_{\mathrm{summary}}}\) depends only on sequence
  indices \(\ge t\).

\paragraph{Input Preparation}
Given the future summary \(\mathbf{h}_t\), the encoder autoregressively infers the parameters for
  \(\mathbf{z}_t\) contingent on previously sampled states \(\mathbf{z}_{t-j : t-1}\).
For the first \(j\) time steps, we left-pad the latent history with zero vectors and include a
  binary mask to indicate which time steps are padded versus valid, and add this to the set of
  conditioning information.
This is the same strategy taken by the decoder (see \cref{app:decoder_architecture}).

We linearly project the future summary from \(C_{\mathrm{summary}}\) to the Context Producer
  spatial dimension \(H_{\mathrm{seq}}\).
Likewise, the \(j\) history vectors are mapped from the latent dimension \(M\) to
  \(H_{\mathrm{seq}}\).
These are concatenated along the temporal axis to yield the primary input tensor
  \(\mathbf{H}_{\mathrm{in}} \in \mathbb{R}^{H_{\mathrm{seq}} \times (j+1)}\) for the Context
  Producer, where the future summary is located at index \(0\) and the history vectors are located at
  indices \(1 \dots j\).
By placing the future summary at the beginning of the sequence, we allow the Context producer to
  attend to the future summary at each layer of its architecture.
This choice is once again inspired by~\citet{krishnan2017structured}, who project the future
  summary to the initial hidden state of their RNN-based encoder, which allows the future summary to
  influence the entire inference process.
In fact, this is the behavior of the Context Producer when we use an RNN for the time-mixing
  operation in the residual blocks.

\paragraph{Conditioning Construction}
The side information for the encoder's Context Producer includes absolute time embeddings for the
  \(j\) history steps, as well as a binary mask indicating which of the \(j\) history steps are valid
  versus left-padded, and dynamical covariates for the \(j\) history steps.
We additionally include static covariates, and similar to the encoder we project these from the
  data space to the Context Producer's spatial dimension to ease the task of matching the generated
  latent states to the observed data.
To summarize these inputs, we have:
\begin{itemize}
    \item \textbf{Temporal \& Covariate Information:}
          We gather absolute time embeddings and dynamic covariates aligned with the exact \((j+1)\) slots:
            time \(t\) for the summary slot, and \(t-j \dots t-1\) for the history slots.
    \item \textbf{Role Mask:}
          We employ a binary mask---set to \(0\) for the future summary slot and \(1\) for the history
            slots---which is projected through a learned linear layer.
          This explicitly instructs the residual blocks to treat the two modalities differently.
    \item \textbf{Padding Mask:}
          We use an additional binary mask representing valid versus zero-padded history steps, which is also
            passed through a learned embedding layer.
    \item \textbf{Static Information:}
          Similar to the decoder, static categorical or continuous covariates are mapped to the spatial
            attribute \(H_{\mathrm{seq}}\) and expanded identically across the temporal axis.
\end{itemize}

\paragraph{Gaussian Output}
The sequence \(\mathbf{H}_{\mathrm{in}}\) and side information \(\mathbf{S}\) are evaluated through
  the Context Producer described in \cref{app:context_producer_architecture}, returning a flattened
  representation vector in \(\mathbb{R}^{C \times H_{\mathrm{seq}}}\).
Finally, this vector is processed by a Gaussian MLP head to produce the mean \(\mu_t \in
  \mathbb{R}^M\) and diagonal log-variance \(\log \Sigma_t \in \mathbb{R}^M\) defining the
  variational posterior for \(\mathbf{z}_t\).

\subsection{Transition Architecture (Baselines)} \label{app:transition_architecture}
The Gaussian baseline transition model parameterizes \(p^^t_\psi(\mathbf{z}_t |
  \mathbf{z}_{t-j:t-1}, \mathbf{u}_{t-j:t})\) as a diagonal Gaussian, where the mean and log-variance
  are produced by a Context Producer followed by a Gaussian MLP head.

\paragraph{Input Preparation}
The transition model receives only the length-\(j\) latent history \(\mathbf{z}_{t-j:t-1}\) as
  input (no future summary or observation information).
Each latent vector \(\mathbf{z}_{t-i} \in \mathbb{R}^M\) is linearly projected to the Context
  Producer's spatial dimension \(H_{\mathrm{seq}}\), yielding the primary input tensor
  \(\mathbf{H}_{\mathrm{in}} \in \mathbb{R}^{H_{\mathrm{seq}} \times j}\).
Unlike the encoder, no padding is required: the transition model is only invoked for \(t > j\), so
  a full history of length \(j\) is always available.

\paragraph{Conditioning Construction}
Because the transition model has no future summary and no padded positions, its side information is
considerably simpler than the encoder's:
\begin{itemize}
    \item \textbf{Temporal Information:}
          Absolute time embeddings for the \(j\) history steps \(t-j, \ldots, t-1\) are gathered and, if
            dynamic covariates \(\mathbf{u}_{t-j:t}\) are present, concatenated along the feature dimension.
    \item \textbf{Mask Information:}
          No mask embeddings are used.
          The Context Producer receives a zero-dimensional mask tensor (i.e.\ \texttt{mask\_tot\_dim}~\(=
            0\)), so no role mask or padding mask is injected.
    \item \textbf{Static Information:}
          No static covariates are provided, since the transition operates entirely in the latent space and
            does not need to align with observation-level features.
\end{itemize}

\paragraph{Gaussian Output}
The Context Producer condenses the history into a flattened context vector in \(\mathbb{R}^{C
  \times H_{\mathrm{seq}}}\), which is passed to a Gaussian MLP head producing the mean
  \(\mu_\psi(\mathbf{z}_{t-j:t-1}) \in \mathbb{R}^M\) and diagonal log-variance \(\log
  \Sigma_\psi(\mathbf{z}_{t-j:t-1}) \in \mathbb{R}^M\).
The transition density is then
\begin{equation*}
    p^^t_\psi(\mathbf{z}_t | \mathbf{z}_{t-j:t-1}, \mathbf{u}_{t-j:t})
    = \mathcal{N}\!\left(\mathbf{z}_t;\; \mu_\psi(\mathbf{z}_{t-j:t-1}),\;
    \operatorname{diag}\!\left(\exp \log \Sigma_\psi(\mathbf{z}_{t-j:t-1})\right)\right).
\end{equation*}

\subsection{VHP Architecture} \label{app:vhp_architecture}
The variational hierarchical prior (VHP) for the initial latent states \(p_\eta(\mathbf{z}_{1:j} |
  \mathbf{z}_{-j+1:0})\) is implemented with two small Context Producers, one for the conditional
  prior \(p_\eta\) and one for the auxiliary variational posterior \(q_\Phi(\mathbf{z}_{-j+1:0} |
  \mathbf{z}_{1:j})\).
The architectures described below we implement with fewer layers and smaller hidden dimensions,
  given the simpler task of modeling only \(j\) time steps rather than the full sequence of length
  \(T\).
We refer to \cref{app:vhp} for the mathematical derivation.

\paragraph{Conditional Prior \texorpdfstring{\(p_\eta(\mathbf{z}_t | \mathbf{z}_{t-j:t-1})\)}{p\_eta}}
The conditional prior for \(t \leq j\) shares the same architecture as the Gaussian baseline
  transition described above: a Context Producer of temporal length \(j\) followed by a Gaussian MLP
  head producing \(\mu_\eta, \log\Sigma_\eta \in \mathbb{R}^M\).
Its side information consists of absolute time embeddings, optional dynamic covariates, and an
  embedded padding mask, since during the autoregressive generation of \(\mathbf{z}_{1}, \ldots,
  \mathbf{z}_{j}\), the conditioning history may include auxiliary latents \(\mathbf{z}_{-j+1:0}\)
  sampled from the VHP prior \(p(\mathbf{z}_{-j+1:0}) = \mathcal{N}(\mathbf{0}, \mathbf{I})\) or from
  the auxiliary posterior \(q_\Phi\).
Concretely, at step \(t \in \{1, \ldots, j\}\), the history \(\mathbf{z}_{t-j:t-1}\) is formed by
  concatenating the available auxiliary latents with any already-generated initial latents,
  left-padded with zeros when \(t < j\).
A binary padding mask indicating which positions are real versus padded is embedded through a
  learned linear layer and injected as mask side information, exactly as in the encoder
  (\cref{app:encoder_architecture}).

\paragraph{Auxiliary Posterior \texorpdfstring{\(q_\Phi(\mathbf{z}_{-j+1:0} | \mathbf{z}_{1:j})\)}{q\_Phi}}
The auxiliary posterior takes the encoder's initial latent samples \(\mathbf{z}_{1:j}\) and
  produces a diagonal Gaussian over all \(j\) auxiliary latents jointly.
Each latent \(\mathbf{z}_t\) for \(t \in \{1, \ldots, j\}\) is linearly projected from
  \(\mathbb{R}^M\) to \(\mathbb{R}^{H_{\mathrm{seq}}}\), forming an input sequence of shape
  \((H_{\mathrm{seq}}, j)\) for a second, smaller Context Producer.
This Context Producer uses the same residual-block architecture described in
  \cref{app:context_producer_architecture}, with temporal side information (absolute time embeddings
  and optional covariates for the first \(j\) steps) but no mask embeddings
  (\texttt{mask\_tot\_dim}~\(= 0\)), since the full sequence \(\mathbf{z}_{1:j}\) is always observed.
The flattened output in \(\mathbb{R}^{C \times H_{\mathrm{seq}}}\) is passed to a Gaussian MLP head
  that outputs the mean \(\mu_\Phi \in \mathbb{R}^{M \times j}\) and diagonal log-variance
  \(\log\Sigma_\Phi \in \mathbb{R}^{M \times j}\) for the \(j\) auxiliary latents.
Sampling is performed via the reparameterization trick: \[ \mathbf{z}_{-j+1:0} =
  \mu_\Phi(\mathbf{z}_{1:j}) + \exp\!
\left(\tfrac{1}{2}\log\Sigma_\Phi(\mathbf{z}_{1:j})\right) \odot \boldsymbol{\epsilon},
\qquad \boldsymbol{\epsilon} \sim \mathcal{N}(\mathbf{0}, \mathbf{I}).
\]

\section{Implementation Details}\label{app:implementation}
\subsection{Parameters}\label{app:parameters}
For clarity, we summarize the parameters being optimized in our model.
The full parameter set consists of five groups:

\begin{enumerate}
    \item \textbf{Encoder parameters} \(\phi\): all parameters of the encoder Context Producer,
          Future Summary module, and Gaussian MLP head
          (\cref{app:encoder_architecture}).
    \item \textbf{Decoder parameters} \(\theta\): all parameters of the decoder network
          (\cref{app:decoder_architecture}).
    \item \textbf{Transition parameters} \(\psi\): all parameters of the transition Context
          Producer and Gaussian MLP head (\cref{app:transition_architecture}).
    \item \textbf{VHP conditional prior parameters} \(\eta\): all parameters of the
          conditional prior Context Producer and Gaussian MLP head used to model
          \(p_\eta(\mathbf{z}_t | \mathbf{z}_{t-j:t-1})\) for
          \(t \in \{1, \ldots, j\}\) (\cref{app:vhp_architecture}).
    \item \textbf{VHP auxiliary posterior parameters} \(\Phi\): all parameters of the
          auxiliary posterior Context Producer and Gaussian MLP head used to model
          \(q_\Phi(\mathbf{z}_{-j+1:0} | \mathbf{z}_{1:j})\)
          (\cref{app:vhp_architecture}).
\end{enumerate}

\subsection{Optimization Techniques} \label{app:optimization}
It is a well-known difficulty of VAEs that the optimization of the ELBO may be viewed as a
  multi-objective optimization problem, where the two competing objectives are the reconstruction of
  the data and the regularization of the latent space via the
  prior~\cite{higgins2017beta,rezende2018taming}.
For example, in the most-general VAE setting, the ELBO may be written as
\begin{equation*}
    \text{ELBO}(\phi,\theta) = - \mathbb{E}_{ q_\phi(\mathbf{z} | \mathbf{x})}[\log p^^t_\theta(\mathbf{x} | \mathbf{z})] + \KL{q_\phi(\mathbf{z} | \mathbf{x})}{p(\mathbf{z})},
\end{equation*}
or in the distortion-rate form,
\begin{equation*}
    \text{ELBO}(\phi,\theta) = \mathcal{D}(\phi,\theta) + \mathcal{R}(\phi),
\end{equation*}
where the distortion term \( \mathcal{D}(\phi,\theta) := - \mathbb{E}_{ q_\phi(\mathbf{z} | \mathbf{x})}[\log p_\theta(\mathbf{x} | \mathbf{z})] \) measures the
reconstruction error, and the rate term \( \mathcal{R}(\phi) := \KL{q_\phi(\mathbf{z} | \mathbf{x})}{p(\mathbf{z})} \) measures the
complexity of the latent representation.\TODO{If unclear, we may put the specific rate distortion forms here}

It is commonplace in the VAE literature and the Deep SSM literature to employ a weighting term \(
  \beta\) on the rate term to balance the tradeoff between reconstruction and
  regularization~\cite{higgins2017beta, krishnan2017structured}, this is particularly important for
  DSSMs with learnable priors.
\AFTER{verify these citations and add more if needed.}
To emphasize learning a good reconstruction before regularization, it is common to anneal \( \beta
  \) from \( 0 \) to \( 1 \) during training.
To avoid confusing notation with the diffusion schedule, we denote this regularization term as
  \(\lambda\) instead of \(\beta\).

That is, we optimize the following objective:
\begin{equation*}
    \mathcal{L}_{\text{ELBO}}(\phi,\eta,\theta,\psi,\Phi; \mathbf{x}_{1 : T}, \mathbf{u}_{1 : T})
    = \mathcal{D}(\phi,\theta) + \lambda \mathcal{R}(\phi,\eta,\psi,\Phi),
\end{equation*}
where the distortion term \(\mathcal{D}\) includes the reconstruction loss, and the rate term \(\mathcal{R}\) collects the posterior
entropy, diffusion KL, any initialization likelihoods, and auxiliary posterior KL contributions.

We utilize cosine annealing schedules for \( \lambda \) in our experiments.
To find the optimal schedule, we use the hyperparameter \(\lambda_{\text{end}}\) to specify the
  ending value of \( \lambda \), and use the hyperparameter \(\lambda_{\text{warmup}}\) to specify
  the number of training iterations until \( \lambda \) reaches \( \lambda_{\text{end}} \).
Empirically, we find that choosing a good schedule is crucial for performance, hence we include
  these parameters in our hyperparameter sweep during training (see \cref{app:tuning}).
\AFTER{polish this section}

\subsection{Computing MC
    Estimates of ELBO Terms}
It still remains to detail the computation of the other ELBO terms, and to deal with the
  expectations in each term.

When any of the distributions involved are Gaussian, computing the KL divergences becomes
  straightforward as the Gaussian distribution admits closed-form expressions for likelihoods and KL
  divergences.
In the case of non-Gaussian distributions, we must use a Monte Carlo estimates to compute the
  various expectations.
To allow us to reduce the error of the Monte Carlo estimates, we draw several latent trajectory
  samples from our encoder during training.
Let the \( S \) trajectories drawn from the encoder during training be denoted by \(
  \mathbf{z}_{1:T}^{(s)} \sim q_\phi(\mathbf{z}_{1 : T} \mid \mathbf{x}_{1 : T}, \mathbf{u}_{1 : T})
  \).
We now detail the Monte-Carlo estimate for each term in the ELBO.

\subsubsection{Initialization Loss}
Following the decomposition in \cref{eq:vhp_loss}, we compute the initialization loss as the sum of
  the entropy term and the VHP loss.
Using the sampled latent trajectories \( \mathbf{z}_{1:T}^{(s)} \), we first compute the entropy estimate:
\begin{equation}
    \mathcal{L}_{\text{init}}^{\text{entropy, MC}}(\phi; \mathbf{x}_{1 : T}, \mathbf{u}_{1 : T})
    = \frac{1}{S} \sum_{s=1}^S \log q_\phi(\mathbf{z}_{1:j}^{(s)} | \mathbf{x}_{1:T}, \mathbf{u}_{1:T}).
\end{equation}
For the VHP loss, we sample auxiliary variables \( \mathbf{z}_{-j+1:0}^{(s)} \sim
  q_\phi(\mathbf{z}_{-j+1:0} | \mathbf{z}_{1:j}^{(s)}) \) for each trajectory \( s \).
The KL divergence term is computed analytically, assuming Gaussian distributions.
\begin{align}
    \mathcal{L}_{\text{VHP}}^{\text{MC}}(\phi,\eta,\Phi; \mathbf{x}_{1 : T}, \mathbf{u}_{1 : T})
     & = \frac{1}{S} \sum_{s=1}^S
    \Bigg[
    - \sum_{t=1}^j \log p_\eta(\mathbf{z}_t^{(s)} | \mathbf{z}_{t-j : t-1}^{(s)}) \nonumber \\
     & \qquad+ \KL{q_\phi(\mathbf{z}_{-j+1 : 0} | \mathbf{z}_{1 : j}^{(s)})}{p(\mathbf{z}_{-j+1 : 0})}
    \Bigg],
\end{align}
where the history \( \mathbf{z}_{t-j:t-1}^{(s)} \) in the reconstruction term draws from the auxiliary variables \( \mathbf{z}_{-j+1:0}^{(s)} \) when indices are non-positive.
The total initialization loss is
\begin{equation}
    \mathcal{L}_{\text{init}}^{\text{MC}} = \mathcal{L}_{\text{init}}^{\text{entropy, MC}} + \mathcal{L}_{\text{VHP}}^{\text{MC}}. \label{eq:init_loss_mc}
\end{equation}

\subsubsection{Sequence Reconstruction Loss}
For the reconstruction loss, we use the same sampled latent trajectories \( \mathbf{z}_{1:T}^{(s)}
  \sim q_\phi(\mathbf{z}_{1 : T} \mid \mathbf{x}_{1 : T}, \mathbf{u}_{1 : T}) \) as above.

Our decoder is Gaussian, and given most generally by
\begin{align*}
    p^^t_\theta(\mathbf{x}_t \mid \mathbf{z}_{\max(1,t-j+1) :
    t}, & \mathbf{u}_{\max(1,t-j+1) : t}) = \\
        & \mathcal{N}(\mu_\theta(\mathbf{z}_{\max(1,t-j+1) : t},
    \mathbf{u}_{\max(1,t-j+1) : t}), \Sigma_\theta(\mathbf{z}_{\max(1,t-j+1) : t},
    \mathbf{u}_{\max(1,t-j+1) : t})),
\end{align*} so the log-likelihood is available in closed form.
For \(t \leq j\), the conditioning window \(\mathbf{z}_{\max(1,t-j+1):t}\) contains fewer than
  \(j\) entries; in practice, the decoder left-pads the history with zeros to maintain a fixed input
  size, consistent with the architecture described in \cref{app:decoder_architecture}.

For trajectory sample \( \mathbf{z}_{1:T}^{(s)} \), we denote the decoder mean and covariance as
\begin{align*}
    \mu_\theta^{(s)}    & := \mu_\theta(\mathbf{z}_{\max(1,t-j+1) : t}^{(s)}, \mathbf{u}_{\max(1,t-j+1) : t}), \\
    \Sigma_\theta^{(s)} & := \Sigma_\theta(\mathbf{z}_{\max(1,t-j+1) : t}^{(s)}, \mathbf{u}_{\max(1,t-j+1) : t}).
\end{align*}

Using that
\[
    -\log \mathcal{N}(\mathbf{x}_t; \mu_\theta, \Sigma_\theta)
    = \frac{1}{2}
    \Big[
        (\mathbf{x}_t - \mu_\theta)^\top \Sigma_\theta^{-1} (\mathbf{x}_t - \mu_\theta)
        + \log \det(2\pi \Sigma_\theta)
        \Big],
\]
if we have diagonal covariance \( \Sigma_\theta \), this reduces to a weighted squared error plus a log
variance term and then

\begin{align*}
    - \log \mathcal{N}\big(\mathbf{x}_t;\, \mu_\theta, \Sigma_\theta\big)
     & = \frac{1}{2} \sum_{i=1}^D
    \left[
        \frac{(x_{t,i} - \mu_{\theta,i})^2}{\sigma_{\theta,i}^2}
        + \log(2\pi \sigma_{\theta,i}^2)
        \right],
\end{align*}
Thus we have

\begin{align}
    \mathcal{L}_{\text{recon}}(\phi,\theta; \mathbf{x}_{1 : T}, \mathbf{u}_{1 : T})
     & = \sum_{t=1}^T
    \mathbb{E}_{q_\phi(\mathbf{z}_{1:T} \mid \mathbf{x}_{1:T}, \mathbf{u}_{1 : T})}
    \Big[
        - \log p^^t_\theta\big(\mathbf{x}_t \mid \mathbf{z}_{\max(1,t-j+1) : t}, \mathbf{u}_{\max(1,t-j+1) : t}\big)
    \Big]\nonumber \\
     & \approx \frac{1}{S} \sum_{s=1}^S \sum_{t=1}^T
    \Big[
        - \log \mathcal{N}\big(\mathbf{x}_t;\, \mu_\theta^{(s)}, \Sigma_\theta^{(s)}\big)
    \Big]\nonumber \\
     & = \frac{1}{2S} \sum_{s=1}^S \sum_{t=1}^T \sum_{i=1}^D
    \left[
        \frac{(x_{t,i} - \mu_{\theta,i}^{(s)})^2}{\sigma_{\theta,i}^{(s)2}}
        + \log(2\pi \sigma_{\theta,i}^{(s)2})
    \right]\nonumber \\
     & := \mathcal{L}_{\text{recon}}^{\text{MC}}(\phi,\theta; \mathbf{x}_{1 : T}, \mathbf{u}_{1 : T}) \label{eq:recon_loss_mc}.
\end{align}

\subsubsection{Posterior Entropy}
Next, we compute the Monte Carlo estimate of the posterior entropy term.

\begin{align}
    \mathcal{L}_{\text{entropy}}(\phi; \mathbf{x}_{1 : T}, \mathbf{u}_{1 : T})
     & = \sum_{t=j+1}^T
    \mathbb{E}_{q_\phi(\mathbf{z}_{1:T} \mid \mathbf{x}_{1:T}, \mathbf{u}_{1 : T})}
    \Big[
        \log q^^t_\phi\big(\mathbf{z}_t \mid \mathbf{x}_{t:T}, \mathbf{z}_{t-j : t-1}, \mathbf{u}_{t-j : t}\big)
    \Big]\nonumber \\
     & \approx \frac{1}{S} \sum_{s=1}^S \sum_{t=j+1}^T
    \Big[
    \log q^^t_\phi\big(\mathbf{z}_t^{(s)} \mid \mathbf{x}_{t:T}, \mathbf{z}_{t-j : t-1}^{(s)}, \mathbf{u}_{t-j : t}\big)
    \Big]\nonumber \\
     & := \mathcal{L}_{\text{entropy}}^{\text{MC}}(\phi; \mathbf{x}_{1 : T}, \mathbf{u}_{1 : T}) \label{eq:post_entropy_mc}.
\end{align}

Again, for a Gaussian inference model, this term may be computed in closed form.

\subsubsection{Diffusion Loss}
The diffusion loss is given by \cref{eq:diff_loss}, and may be computed using the sampled latent
  trajectories \( \mathbf{z}_{1:T}^{(s)} \sim q_\phi(\mathbf{z}_{1 : T} \mid \mathbf{x}_{1 : T},
  \mathbf{u}_{1 : T}) \) as above.
We replace the expectation over the variational posterior with a Monte Carlo estimate,
\begin{align*}
    \mathcal{L}_{\widehat{\text{diff}}}(\phi,\psi; \mathbf{x}_{1 : T}, \mathbf{u}_{1 : T})
     & = \sum_{t=j+1}^T \mathbb{E}_{q_\phi(\mathbf{z}_{1:T} \mid \mathbf{x}_{1 : T}, \mathbf{u}_{1 : T})} \Big[
    \KL{q(\mathbf{z}_t^K | \mathbf{z}_t^0)}{p(\mathbf{z}_t^K)} \nonumber \\
     & \qquad+ \mathbb{E}_{\epsilon \sim \mathcal{N}(0,I), k \sim \text{Unif}\{1, \ldots, K\}}
    \left[
        K\tilde{w}_k ||F_\psi(\mathbf{z}_t^k, \mathbf{c}^{(s)}_t, k) - \+y(\mathbf{z}_t^0,k)||_2^2
        \right]
    \Big] \\
     & \approx \frac{1}{S} \sum_{s=1}^S \sum_{t=j+1}^T \Big[
    \KL{q(\mathbf{z}_t^K | \mathbf{z}_t^{0(s)})}{p(\mathbf{z}_t^K)} \nonumber \\
     & \qquad+ \mathbb{E}_{\epsilon \sim \mathcal{N}(0,I), k \sim \text{Unif}\{1, \ldots, K\}}
    \left[
    K\tilde{w}_k ||F_\psi(\mathbf{z}_t^{k(s)}, \mathbf{c}^{(s)}_t, k) - \+y(\mathbf{z}_t^{0(s)},k)||_2^2
    \right]
    \Big], \\
\end{align*}
where \( \mathbf{c}^{(s)}_t \) denotes the Markov history constructed from the sampled latent trajectory
\( \mathbf{z}_{1:T}^{(s)} \).
Note that for each of the sampled latent trajectories \( \mathbf{z}_{1:T}^{(s)} \), we must run the
  forward diffusion process to obtain the noisy latents \( \mathbf{z}_t^{k(s)} \) for \( k=1, \ldots,
  K \).
That is, for each time step \( t=j+1, \ldots, T \) and sample \( s=1, \ldots, S \), we draw unique
  noise samples and diffusion steps and compute the corresponding noisy latents.
To increase the amount of training signal to the diffusion model, we may draw multiple diffusion
  steps or noise samples per latent trajectory sample.
During typical diffusion model training, the clean data is known, so over the course of training
  the diffusion model sees many noise samples per data point.
Here, since the clean latent is itself sampled from a variational posterior, the diffusion model
  only sees one clean latent per data point per training iteration.
To compensate, we may draw \( S_k \) diffusion steps per latent trajectory sample \(
  \mathbf{z}_{1:T}^{(s)} \) (with corresponding noise samples), increasing the training signal to the
  diffusion model per iteration.
Additionally, we note that the first term in the diffusion loss is computed in closed form for
  Gaussian distributions.
\AFTER{show this. pretty much 0 for large enough k, but for small?}

The resulting Monte Carlo estimate of the diffusion loss is finally
\begin{align}
    \mathcal{L}_{\widehat{\text{diff}}}^{\text{MC}}(\phi,\psi; \mathbf{x}_{1 : T}, \mathbf{u}_{1 : T})
     & = \frac{1}{S} \sum_{s=1}^S \sum_{t=j+1}^T \Big[
    \KL{q(\mathbf{z}_t^K | \mathbf{z}_t^{0(s)})}{p(\mathbf{z}_t^K)} \nonumber \\
     & \qquad+ \frac{1}{S_k} \sum_{m=1}^{S_k} \mathbb{E}_{\epsilon \sim \mathcal{N}(0,I), k \sim \text{Unif}\{1, \ldots, K\}}
    \left[
    K\tilde{w}_k ||F_\psi(\mathbf{z}_t^{k(s,m)}, \mathbf{c}^{(s)}_t, k) - \+y(\mathbf{z}_t^{0(s)},k)||_2^2
    \right]
    \Big], \label{eq:diff_mc}
\end{align}
where \( \mathbf{z}_t^{k(s,m)} \) denotes the noisy latent at diffusion step \( k \) for time step \( t \),
sample \( s \), and diffusion sample \( m \).

\subsubsection{Full ELBO Objective}
Combining each of the above terms, we have the full ELBO objective
\begin{align}
    \mathcal{L}_{\text{ELBO}}^{\text{MC}}(\phi,\eta,\theta,\psi,\Phi; \mathbf{x}_{1 : T}, \mathbf{u}_{1 : T})
     & =
    \mathcal{L}_{\text{init}}^{\text{MC}}(\phi,\eta,\Phi; \mathbf{x}_{1 : T}, \mathbf{u}_{1 : T})
    + \mathcal{L}_{\text{recon}}^{\text{MC}}(\phi,\theta; \mathbf{x}_{1 : T}, \mathbf{u}_{1 : T}) \nonumber \\
     & \quad+
    \mathcal{L}_{\widehat{\text{diff}}}^{\text{MC}}(\phi,\psi; \mathbf{x}_{1 : T}, \mathbf{u}_{1 : T})
    + \mathcal{L}_{\text{entropy}}^{\text{MC}}(\phi; \mathbf{x}_{1 : T}, \mathbf{u}_{1 : T}). \label{eq:elbo_terms_mc}
\end{align}

\subsection{Training Algorithm}
We consider the case of training with multiple i.i.d.
time series of fixed length \( T \).
In this case we compute the ELBO over the full time series for each sequence in the minibatch, and
  average the ELBOs to get the minibatch ELBO.
We describe the training procedure in \cref{alg:training_nonwindowed}, and the sampling of latent
  trajectories in \cref{alg:sample_latent_paths}.

\begin{algorithm}[ht]
    \caption{Training for Diffusion State-Space Models} \label{alg:training_nonwindowed}
    \DontPrintSemicolon
    \SetAlgoLined
    \KwIn{Minibatch $\{(\mathbf{x}_{1:T}^{(b)}, \mathbf{u}_{1:T}^{(b)})\}_{b=1}^B$; untrained encoder $q_\phi$ (including $F_\phi$); initial prior $p_{\eta}$; auxiliary posterior $q_\Phi$; decoder $p^^t_\theta$; diffusion model $p^^t_\psi$; number of latent paths $S$; number of diffusion samples per path $S_k$; Markov order $j$}
    \KwOut{Updated parameters $(\theta, \phi, \psi, \eta, \Phi)$}

    \For{$b \leftarrow 1$ \KwTo $B$}{
        $\{\mathbf{z}_{1:T}^{(s,b)}\}_{s=1}^S,\; \{\mathbf{z}_{-j+1:0}^{(s,b)}\}_{s=1}^S \leftarrow \texttt{SampleLatentPaths}(\mathbf{x}_{1:T}^{(b)}, \mathbf{u}_{1:T}^{(b)}, F_\phi, q_\phi, q_\Phi, S, j)$\;
        Compute initialization loss $\mathcal{L}_{\text{init}}^{\text{MC},(b)}$ using \cref{eq:init_loss_mc} with $\{\mathbf{z}_{1:j}^{(s,b)}\}_{s=1}^S$ and $\{\mathbf{z}_{-j+1:0}^{(s,b)}\}_{s=1}^S$\;
        Compute reconstruction loss $\mathcal{L}_{\text{recon}}^{\text{MC},(b)}$ using \cref{eq:recon_loss_mc} with $\{\mathbf{z}_{1:T}^{(s,b)}\}_{s=1}^S$\;
        Compute posterior entropy term $\mathcal{L}_{\text{entropy}}^{\text{MC},(b)}$ using \cref{eq:post_entropy_mc} with $\{\mathbf{z}_{1:T}^{(s,b)}\}_{s=1}^S$\;
        Compute diffusion loss $\mathcal{L}_{\widehat{\text{diff}}}^{\text{MC},(b)}$ using \cref{eq:diff_mc} with $\{\mathbf{z}_{1:T}^{(s,b)}\}_{s=1}^S$ and $S_k$ diffusion samples per path\;
        Compute \(\lambda\) from annealing schedule\;
        $\mathcal{L}_{\text{ELBO}}^{\text{MC},(b)} \leftarrow
            \mathcal{L}_{\text{recon}}^{\text{MC},(b)} +
            \lambda \left(
            \mathcal{L}_{\text{init}}^{\text{MC},(b)} +
            \mathcal{L}_{\widehat{\text{diff}}}^{\text{MC},(b)} +
            \mathcal{L}_{\text{entropy}}^{\text{MC},(b)}\right)$\;
        \;
    }
    $\mathcal{L}_{\text{ELBO}}^{\text{MC}} \leftarrow \frac{1}{B}\sum_{b=1}^B \mathcal{L}_{\text{ELBO}}^{\text{MC},(b)}$\;
    Update $(\theta, \phi, \psi, \eta, \Phi)$ with gradient descent on $\mathcal{L}_{\text{ELBO}}^{\text{MC}}$\;
\end{algorithm}
\begin{algorithm}[H]
    \caption{\texttt{SampleLatentPaths}: Sampling of \( S \) latent trajectories} \label{alg:sample_latent_paths}
    \DontPrintSemicolon
    \SetAlgoLined
    \KwIn{Observations $\mathbf{x}_{1:T}$, covariates $\mathbf{u}_{1:T}$, future summary network $F_\phi$, encoder $q_\phi$, VHP auxiliary posterior $q_\Phi$, number of paths $S$, Markov order $j$}
    \KwOut{Latent trajectories $\{\mathbf{z}_{1:T}^{(s)}\}_{s=1}^S$ and auxiliary latents $\{\mathbf{z}_{-j+1:0}^{(s)}\}_{s=1}^S$}

    Compute future summaries $(\mathbf{h}_T, \ldots, \mathbf{h}_1) \leftarrow F_\phi(\mathbf{x}_{1:T}, \mathbf{u}_{1:T})$\;

    \For{$s \leftarrow 1$ \KwTo $S$}{
        \tcp{Step 1: Sample initial $j$ latents}
        \For{$t \leftarrow 1$ \KwTo $j$}{
            $k \leftarrow t - 1$ \tcp*{available history length}
            $m \leftarrow j - k$ \tcp*{missing history slots}
            Left-pad with zeros: $\mathbf{z}_{t-j:t-k-1}^{(s)} \leftarrow \mathbf{0}$\;
            Form length-$j$ history $\mathbf{z}_{t-j:t-1}^{(s)} \leftarrow (\mathbf{z}_{t-j:t-k-1}^{(s)},\; \mathbf{z}_{t-k:t-1}^{(s)})$\;
            Sample $\mathbf{z}_t^{(s)} \sim q^^t_\phi(\mathbf{z}_t \mid \mathbf{z}_{t-j:t-1}^{(s)}, \mathbf{h}_t)$\;
            Append $\mathbf{z}_t^{(s)}$ to path $s$\;
        }

        \tcp{Step 2: Sample auxiliary latents via VHP}
        Sample $\mathbf{z}_{-j+1:0}^{(s)} \sim q_\Phi(\mathbf{z}_{-j+1:0} \mid \mathbf{z}_{1:j}^{(s)})$\;

        \tcp{Step 3: Sample remaining latents autoregressively}
        \For{$t \leftarrow j+1$ \KwTo $T$}{
            $\mathbf{z}_{t-j:t-1}^{(s)} \leftarrow$ last $j$ latents from path $s$\;
            Sample $\mathbf{z}_t^{(s)} \sim q^^t_\phi(\mathbf{z}_t \mid \mathbf{z}_{t-j:t-1}^{(s)}, \mathbf{h}_t)$\;
            Append $\mathbf{z}_t^{(s)}$ to path $s$\;
        }
    }
    \Return $\{\mathbf{z}_{1:T}^{(s)}\}_{s=1}^S$, $\{\mathbf{z}_{-j+1:0}^{(s)}\}_{s=1}^S$\;
\end{algorithm}
We briefly describe the computational cost incurred by the diffusion model during training and
  sampling.
Under \cref{alg:training_nonwindowed}, we point out to the reader that the diffusion model receives
  an effective batch size of \(B_{\mathrm{eff}} = B \times (T-j) \times S \times S_k\), where \(B\)
  is the number of sequences in a batch, and \(T-j\) is the number of diffusion-parameterized
  transitions in each sequence.
Thus for training only a single evaluation of the diffusion model is required, assuming there is
  sufficient memory to handle the \(T - j\) factor in the batch.
As \(T\) scales, this factor increases necessitating an unavoidable cost in memory (unless gradient
  checkpointing~\citep{chen2016training} is used to trade off time for memory).
However, the larger batch size provides a stronger per-batch training signal to the diffusion model
  during training, amortizing the cost of training the diffusion model across more transitions.
The per-function-evaluation of the diffusion model depends on the size of the input tensor, which
  is \(B_{\mathrm{eff}} \times M \times j + 1\).
Therefore given enough memory, the diffusion-incurred time complexity for a forward pass of
  \cref{alg:training_nonwindowed} is \(O(g(M \times j+1))\) where \(g\) is the time complexity of a
  pass through the diffusion model.
It is only during sampling that we lose parallelism across time steps, in which case the total
  diffusion cost becomes \(O(K \times T \times g(M \times j+1))\), for \(K\) diffusion steps.

\section{Extended Related Works} \label{sec:extended_related_works}

\textbf{Deep state space models.}
Similarly to other Deep State Space Models (DSSMs)~\citep{girin2022dynamical}, our Diffusion-Driven
  State Space Model (DDSSM) parametrizes the transitions and emissions densities of a state space
  model with a neural network.
However, our model replaces the parametric (typically Gaussian) transition distribution of a DSSM
  with a diffusion model.
In particular, we extend the Deep Kalman Filter (DKF) of~\citet{krishnan2015deep,
      krishnan2017structured} to have diffusion transitions.
An immediate shortcoming of the usage of Gaussian transitions is the inability to model multi-modal
  transitions, which we verify in our simulation study in \cref{sec:simulation}.
\Citet{karl2016deep} observed the shortcoming of Gaussian transitions in the context of modeling
physical systems, arguing that the regularization provided by Gaussian transitions
harms reconstruction performance.

Several lines of work have attempted to make the DKF more expressive.~\citet{karl2016deep} proposed
  learning a more flexible transition by learning a deterministic transition of a stochastic
  variable.
That is, they learn \(\mathbf{z}_t = f_\psi(\mathbf{z}_{t-1}, \mathbf{u}_t, \beta_t)\) where
  \(f_\psi\) is a neural network and \(\beta_t\) is a stochastic parameter of the transition sampled
  from a simple distribution such as a Gaussian.
This ensures that the latent states are sampled from the transition distribution, forcing the
  encoder to receive gradients directly from the transition model.

\AFTER{
    Our work might be viewed from this lens as well, since the sampling process of a diffusion model
    can be viewed as a chain of deterministic denoising steps applied to a stochastic variable.
}
A separate track of work attempting to make the DKF more expressive has been to add auxiliary
  variables.
For example the Kalman VAE (KVAE)~\citep{fraccaro2017disentangled} introduces parameters
  \(\mathbf{a}_{1:T}\) and propose the generative model \(p(\mathbf{x}_{1:T}, \mathbf{a}_{1:T},
  \mathbf{z}_{1:T} | \mathbf{u}_{1:T}) = p(\mathbf{x}_{1:T} | \mathbf{a}_{1:T})p(\mathbf{a}_{1:T} |
  \mathbf{z}_{1:T})p(\mathbf{z}_{1:T} | \mathbf{u}_{1:T})\), relying on linear Gaussian
  \(p(\mathbf{a}_t | \mathbf{z}_t, \mathbf{u}_t)\) and \(p(\mathbf{z}_t | \mathbf{z}_{t-1},
  \mathbf{h}_t, \mathbf{u}_t)\) distributions.
\Citet{klushyn2021latent} extends this model to have nonlinear Gaussian transitions, proposing the Extended Kalman VAE (EKVAE).
The primary advantage to their method is that inference about the parameters \(\mathbf{a}_t\) use
  the filtering/smoothing equations of the extended Kalman filter~\citep{ribeiro2004kalman}, to allow
  the training objective for the transition model to be sample-based instead of likelihood-based.
It may be possible to extend the EKVAE and other deep SSMs to use diffusion transitions as well,
  but we leave this to future work: we emphasize that our contribution is not on the design of the
  generative model, but rather on the usage of diffusion models as flexible transition distributions.
\JACK{Using  diffusion for the transitions IS about the design of the generative model.}\JACK{It's not clear why one would want to extend EKVAE to diffusion transitions.}
\AFTER{We should empirically compare to EKVAEs.} \TODO{Can you speculate on some advantage of our approach over this?}
\TODO{You had written here something about a ``list of related works to mention.''}

Lastly, we note that several works have extended deep SSMs using normalizing flows
  (NFs)~\citep{papamakarios2021normalizing}.
While these admit exact-likelihoods, they are heavily constrained by the requirement of invertible
  neural network architectures, limiting applicability across many domains.
One recent example is from~\citet{chen2025normalizingpf}, who propose learning a differentiable
  particle filter~\citep{le2017auto}, allowing the transition and emission densities to be
  parameterized by NFs.
\AFTER{Must discuss and compare SMC based DSSMs.}
We note that diffusion models would be unsuitable for this approach, as the particle filtering
  algorithm requires many samples through the transition for each time step.
Another example is from~\citet{de2020normalizing}, who assume linear transition dynamics and
  parameterize the emission density with a NF.
This enables efficient inference using Kalman-style filtering/smoothing, but does not allow for
  flexible transition dynamics.
We mention~\citet{pmlr-v97-ziegler19a} as well, who consider a variety of NF-based approaches for
  modeling discrete sequences, but they do not consider the true posterior form of their proposed
  generative models.

\textbf{Models for temporal data which apply diffusion directly in observation
    space.
} \JACK{I've reorganized this a bit.
    Each time you bring up a work in the Related Works section, you need to make it clear (a) how it
      relates to ours and (b) why the difference matters, before moving on to some other paper.
    If you want to discuss multiple works together, it's okay to hold off on details until later.
    But if you do that, you should still open with a high-level summary of the relation.
}
Several lines of work model temporal data by applying diffusion methods directly in observation
  space, rather than incorporating a latent dynamic process.
The advantage to this approach is that it does not require learning a latent representation of the
  data, and does not require performing inference about the latent state.
For example,~\cite{rasul2021autoregressive} employs autoregressive forecasting using a diffusion
  model, by conditioning the diffusion model on a summary of all previous time steps of the series.
To create a summary, an RNN is run over the history of the time series, providing a single
  conditioning variable as an input to their diffusion model.
The RNN summary is similar in spirit to the latent state of our model; in our model the
  conditioning set \(\mathbf{c}_t\) fully summarizes the history of the time series up to the time
  \(t\).
The advantage of our approach is that we are able to operate the diffusion model fully in
  latent-space,\TODO{unclear what this means} which is by construction well-suited to modeling the
  dynamics of the time series.
Additionally, the latent space can be designed to be low-dimensional, which can make training and
  sampling more efficient.
Another advantage is that our latent representation may provide a more interpretable representation
  of the dynamics of the time series, whereas it is unclear how to interpret the RNN summary
  of~\citet{rasul2021autoregressive}.
Another well known work is by~\citet{tashiro2021csdi}, who perform imputation using diffusion
  models by learning to denoise imputation targets while conditioning on observed data.
Their method operates on fixed length windows of a time series, and handles forecasting by setting
  imputation targets to be future time steps through a conditioning mask.
By changing the mask, their method can sample many timesteps into the future by conditioning on a
  fixed length history.
One drawback of this approach is that the model cannot utilize past data beyond the fixed length
  window which is determined at training time.
This is contrary to~\citet{rasul2021autoregressive} or our method which attempt to infer the
  necessary parameters to accurately sample the next time step.
More importantly,~\citet{tashiro2021csdi} requires their diffusion model to jointly model the full
  conditional distribution of the entire future given the entire history in a fixed window, which is
  expensive to learn and sample from.
We note that we have designed our diffusion model architecture to match~\citet{tashiro2021csdi}
  closely, as their previously state-of-the-art performance suggests that their architecture is
  well-suited to modeling temporal data.
We detail the differences between our architecture and theirs in \cref{app:unet_architecture}.


\textbf{Latent diffusion models for static data.} \TODO{closely compare vahdat and shmakov, and carefully consider differences after a second pass.}
While diffusion models are already computationally expensive in the static setting---motivating
  latent diffusion models~\citep{rombach2022high}---the challenge is amplified for time series, where
  the model must capture both high-dimensional structure and temporal dynamics.
Our work can be considered a temporal extension of the diffusion models of~\citet{vahdat2021score}
  and~\citet{shmakov2023end}, who train a diffusion model end-to-end in the latent space of a VAE.
Like~\cite{vahdat2021score}, we design the training objective of the diffusion model to account for
  the fact the model is being trained concurrently with the VAE.
However, we employ preconditioning~\citep{karras2022elucidating} of the denoising model to reduce
  the variance of the gradients across noise levels.
This is important since the gradients of the diffusion model are used to train the VAE.
Details of our approach to concurrent training are described in
  \cref{app:faithful_diff_elbo_proof}.

\textbf{Latent diffusion models for temporal data. (WIP)} \TODO{Restate modeling/compute advantages over diffusion models for temporal data in observation space.}
\TODO{remove WIP from the title of this section, and expand on this section.}
\TODO{write individual paper reviews, and then we can organize by subject.} \Citet{qian2024timeldm} propose a latent diffusion model framework for unconditional time-series generation, which they call TimeLDM.
They employ a VAE training objective, where the encoder accepts an entire time series as input, and
  then produces an entire latent time series as output, i.e \(\mathbf{z}_{1:T} =
  \text{Encoder}(\mathbf{x}_{1:T})\).
However, there is no mechanism in their model to capture any temporal structure within
  \(\+z_{1:T}\).
The decoder is responsible for reconstructing the entire time series from the latent time series,
  and the diffusion model is trained to model the distribution of time-series examples in latent
  space.
We note that their method cannot perform forecasting, and they train in a two-stage manner where
  the VAE is trained separately from the diffusion model.
In comparison, our method does capture temporal structure in the latent space, as the latent state
  at time \(t\) is conditioned on the latent state at time \(t-1\).
Additionally, our model can perform any tasks available to a standard state-space model, including
  unconditional generation.

\Citet{suh2025timeautodiff} proposes a latent diffusion modeling framework to
unify forecasting, imputation, unconditional generation, and conditional generation of time series.
Their autoencoder similar to~\citet{qian2024timeldm} outputs an entire latent time series given an
  entire time series as input.
The decoder however aims to reconstruct a target subset of observations given the latent time
  series, a conditioning set from the observations, and a mask to indicate which inputs are
  targets/conditional.
The diffusion model is then trained to model the distribution of the latent time series,
  conditioned on the conditioning set and mask.
This is to say that the diffusion model does not operate independently of the data, contrary to
  our method where the diffusion model lives fully in the latent space, conditioning only on latent
  features.
The strongest similarity which~\citet{suh2025timeautodiff} has to our method is that their method
  constructs a full ELBO training objective to maximize the log-likelihood of the data, where the
  loss of the diffusion model is a component of the ELBO.
Of interest is the observation that end-to-end training was detrimental to their model, instead
  they prefer first training the autoencoder and then training the diffusion model separately.
This claim however is not validated within their paper, and it is unclear how they attempted to
  train end-to-end.
As an artifact of training in stages, their variational posterior is regularized towards a standard
  Gaussian distribution.
If training jointly, their objective instead would regularize the variational posterior towards the
  distribution of the diffusion model.\TODO{Could explain why end-to-end training works for us}
\Citet{liu2024align} and~\cite{feng2024latent}
give other approaches to time-series diffusion modeling in the latent space of a
pre-trained autoencoder. We note that~\citet{feng2024latent} applies regularization of the latent space during the training of the autoencoder to a standard Gaussian distribution to avoid learning
an unregularized space.
\AFTER{Broad outline of what to say here:
    \begin{itemize}
        \item We can revise our belief about \(z_t\) as we obtain future information.
        \item We consider control inputs/covariates.
        \item Need to consider how uncertainty is different between our approach and other approaches.
              If a non-variational autoencoder is used, then there is no explicit modeling of uncertainty.
    \end{itemize}
}


\end{document}